\newcommand{\E}{\mathbb{E}}
\newcommand{\Var}{\mathrm{Var}}
\newcommand{\Cov}{\mathrm{Cov}}
\newcommand{\cN}{\mathcal{N}}
\newcommand{\hY}{\widehat{Y}}
\newcommand{\bmu}{\bm{\mu}} 
\newtheorem{proposition}{Proposition}
\newtheorem{lemma}{Lemma}
\theoremstyle{remark}
\begin{document}

\title{Efficient Inference for Noisy LLM-as-a-Judge Evaluation}
\author{
    Yiqun T. Chen\thanks{Departments of Biostatistics and Computer Science, Johns Hopkins University, Baltimore, MD 21205, USA; correspondence email: \texttt{yiqun.t.chen@gmail.com}}
    \and Sizhu Lu\thanks{Department of Statistics, University of California, Berkeley, CA 94720, USA}
    \and Sijia Li\thanks{Department of Biostatistics, University of California, Los Angeles, CA 90095, USA}
    \and Moran Guo\thanks{Department of Biostatistics, Johns Hopkins University, Baltimore, MD 21205, USA}
    \and Shengyi Li\thanks{Department of Biostatistics, Johns Hopkins University, Baltimore, MD 21205, USA}
}
\date{\today}
\maketitle

\begin{abstract}
Large language models (LLMs) are increasingly used as automatic evaluators of generative AI outputs, a paradigm often referred to as ``LLM-as-a-judge.'' In practice, LLM judges are imperfect predictions for the underlying truth and can exhibit systematic, non-random errors. Two main approaches have recently been proposed to address this issue: (i) \emph{direct measurement-error correction} based on misclassification models such as Rogan--Gladen-style estimators, and (ii) \emph{surrogate-outcome approaches} such as prediction-powered inference (PPI), which correct bias by calibrating prediction residuals on a small set of gold-standard human labels. In this paper, we systematically study the performance of these two approaches for estimating \emph{mean parameters} (e.g., average benchmark scores or pairwise win rates). Leveraging tools from semiparametric efficiency theory, we unify the two classes of estimators by deriving explicit forms of \emph{efficient influence function} (EIF)-based efficient estimators and characterize conditions under which PPI-style estimators attain strictly smaller asymptotic variance than measurement-error corrections. We verify our theoretical results in simulations and demonstrate the methods on a real-data example. We provide an implementation of the benchmarked methods and comparison utilities at \texttt{https://github.com/yiqunchen/debias-llm-as-a-judge}.

\end{abstract}

\section{Introduction}
\label{sec:intro}

Automated model evaluation is an essential yet challenging task in machine learning (ML), especially in the era of generative artificial intelligence (AI), where scalability produces large volumes of outputs to be labeled and introduces diverse output formats ranging from numerical predictions and categorical labels to open-ended paragraphs such as reasoning traces and mathematical proofs, all of which are not adequately captured by traditional metrics such as mean squared error or misclassification rates. Moreover, an increasing number of studies now leverage large language models (LLMs) to generate both substantial quantities of new questions and their corresponding answers, for which running every question-answer pair through human verification is infeasible.

Motivated by recent advances in LLMs, a growing body of work proposes using LLMs as automatic ``judges'' to evaluate the outputs of other models---for example, by determining whether a model's answer is correct, assigning a quality score, or selecting the preferred response between two candidates, a practice known as ``LLM-as-a-judge''. This approach promises scalable and low-cost evaluation, but it also introduces a new source of error: the LLM judge is an imperfect proxy for human judgment. 
At a high level, LLM-as-a-judge workflows typically rely on:
\begin{itemize}
  \item a large \emph{evaluation set} on which \emph{only} LLM judgments are available, and
  \item a smaller \emph{calibration set} on which both human labels and LLM judgments are observed.
\end{itemize}
This setup presents a challenge: relying solely on the calibration set leads to high-variance evaluation, whereas relying on the evaluation set labels leads to biased evaluation, as LLMs are known to produce homogenized judgments and to avoid extreme categories. Recent work has therefore asked the key question: \emph{How can we optimally combine these two datasets to construct valid, efficient estimates with calibrated uncertainties (e.g., confidence intervals) for evaluation metrics?}

Statistically, LLM-as-a-judge evaluation can be framed as a classical measurement-error problem. Consider the binary-outcome setting where the human label $Y$ represents the latent truth and the LLM judgment $\hat{Y}$ is a noisy surrogate. We observe paired data $(Y_i, \hat{Y}_i)$ for a small calibration subset, while for most of the evaluation set only $\hat{Y}_i$ is available. A long-standing approach in this literature is to model the misclassification mechanism $\Pr(\hat{Y}\mid Y)$ and estimate the true accuracy using \emph{direct measurement-error correction}, such as Rogan--Gladen estimators or confusion-matrix inversion~\citep{rogan1978estimating, fuller2009measurement}.  Another school of thought, including \emph{prediction-powered inference} (PPI)~\citep{angelopoulos2023prediction} and related work~\citep{wang2020methods, miao2025assumption, salerno2025ipd,ji2025predictionssurrogatesrevisitingsurrogate,egami2023using,rister2025correcting}, treats the LLM judgment as a generic surrogate outcome for $Y$ and uses the calibration set to correct the average bias of $\hat{Y}$. This approach yields valid statistical inference without explicitly estimating the parameters of the misclassification model. 

Despite progress in both methodological traditions, there has been little direct comparison of their statistical validity and efficiency. This gap likely reflects the distinct cultures and communities in which these approaches are typically applied: measurement-error models have recently been used primarily in medical settings, such as estimating COVID-19 and cause-of-death prevalence under imperfect measurements~\citep{rosin2023estimating,gonzalez2017review,fiksel2022generalized}, whereas PPI-style estimators are more commonly motivated by modern ML/AI applications. A recent exception is \citet{lee2025correctly}, which provides a recipe for applying measurement-error corrections to LLM-as-a-judge settings (see details in Section~\ref{subsec:RG}).

While we primarily focus on guarantees of inferential validity for mean parameters in this work, there is a growing body of excellent complementary research aimed at improving LLM-as-a-judge scores ($\hat{Y}$) so that $\hat{Y} \approx Y$ across a diverse set of tasks and LLM models~\citep{tian2025overconfidence,liu2024calibrating,shi2024judging,sahoo2025quantitative,schroeder2024can,li2025calibraeval}. We view this line of work as complementary to ours: our goal is to provide guarantees regardless of the quality of $\hat{Y}$, while improved calibration of $\hat{Y}$ can further reduce variance across the estimators we consider (see Section~\ref{sec:method} for details).

In this paper, we study the performance of these two approaches for estimating \emph{mean parameters} and unify them through the lens of semiparametric efficiency theory (see Figure~\ref{fig:intro} for an overview): From elementary efficient influence function (EIF) theory for mean estimation with a surrogate label $\widehat{Y}$, the optimal estimator depends on the regression function $\mathbb{E}[Y \mid \widehat{Y}]$. This perspective reveals that popular calibration-based estimators can be understood as approximations to $\mathbb{E}[Y \mid \widehat{Y}]$ using different functional forms: PPI implicitly uses the identity map $Y \approx \widehat{Y}$; \texttt{PPI++} uses a one-parameter linear map $Y \approx \lambda \widehat{Y}$ with a tuning parameter $\lambda$; and \texttt{RePPI} replaces these parametric forms with a flexible machine learning model to estimate $\mathbb{E}[Y \mid \widehat{Y}]$ directly~\citep{ji2025predictionssurrogatesrevisitingsurrogate}. In particular, in the binary-outcome setting, \texttt{PPI++} with the optimal $\lambda$ coincides with the EIF-based estimator, whereas for more general outcomes this coincidence need not hold. These sets of insights we derived are closely related to the rich literature on efficient inference for missing data, especially the missing completely at random setup~\citep{Chen_2008,robins1995semiparametric}. Recently, \citet{xu2025unified} proposed a general ``safe PPI'' framework, which improves upon PPI/\texttt{PPI++} when the parameter of interest has dimension $p>1$, and reached a similar conclusion that the asymptotic variance of \texttt{PPI++} can be strictly improved, unless $\mathbb{E}[Y \mid \widehat{Y}]$ takes a restrictive linear form. Our work can be seen as a parallel exploration and implementation for the scalar outcome case, which covers the most common applications, such as LLM-as-a-judge.

Empirically, we find that across a wide range of parameter settings, our proposed EIF-based estimator outperforms vanilla PPI, and that both methods outperform the Rogan--Gladen estimator in terms of finite-sample bias and the efficiency of their confidence intervals. We further explain these empirical findings \emph{theoretically} by proving that: 
(1) the EIF estimator is asymptotically equivalent to \texttt{PPI++} for binary outcomes with the optimal tuning parameter; and 
(2) the asymptotic variance of the PPI estimator is strictly smaller than that of the Rogan--Gladen estimator. 

Our work helps connect the growing literature on the estimation efficiency theory in classic statistics with the rapidly expanding practice of LLM-as-a-judge and calibration in applied AI work. As LLMs increasingly become integral to data analysis workflows, our work offers practical and much-needed guidance for the reliable uncertainty quantification of LLM-as-a-judge systems.

\begin{figure}
    \centering
    \includegraphics[width=\linewidth]{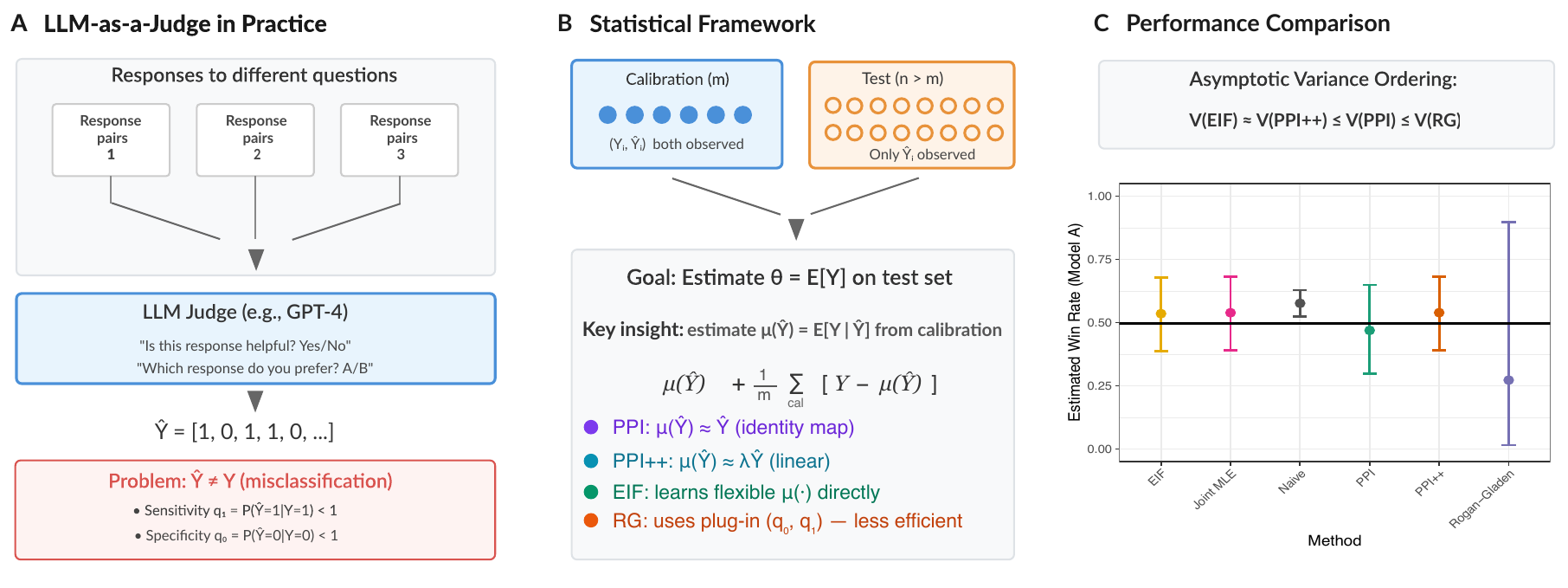}
\caption{\textbf{Calibrating LLM-as-a-judge evaluations.}
\textbf{(A)} LLM-generated labels $\hat{Y}$ are noisy, with sensitivity $q_1<1$ and specificity $q_0<1$.
\textbf{(B)} A human-labeled calibration set enables bias correction using PPI, Rogan--Gladen (RG), or EIF-based estimators.
\textbf{(C)} Comparison of naive, RG, PPI, and EIF-based estimators for evaluating LLM-as-a-judge performance.}
\label{fig:intro}
\end{figure}

\section{Review of Calibration Methods in the Binary Setting}
\label{sec:method}

\subsection{Problem setup}

We stylize the LLM-as-a-judge evaluation as follows: for $i = 1,\dots, N$, we observe i.i.d.\ pairs
\begin{equation}
Y_i \sim \mathrm{Bern}(\theta)
\quad\text{denoting the human label,}
\qquad
\hat Y_i \in \{0,1\}
\quad\text{denoting the LLM-as-a-judge label.}
\label{eq:model-gen}
\end{equation}
The target parameter is the \emph{mean human outcome}
\begin{equation}
\theta = \E[Y] = \Pr(Y=1).
\label{eq:theta-def}
\end{equation}
As a concrete example, if the goal is to compare two responses $R_1$ and $R_2$ produced by different LLMs, let $Y_i=1$ indicate that a human annotator prefers $R_1$ over $R_2$ for prompt $i$ (0 otherwise). The estimand $\theta$ then represents the \emph{human win rate} of $R_1$ over $R_2$.

The full sample of size $N$ is partitioned into two \emph{independent}, disjoint subsets: a \textbf{test set} of size $n$, in which only the surrogate outcome $\hat Y_i$ is observed, and a \textbf{calibration set} of size $m = N - n$, in which both the true outcome $Y_j$ and the surrogate $\hat Y_j$ are observed.
This setup corresponds exactly to a \emph{missing completely at random} (MCAR) mechanism, where the true outcome $Y_i$ is missing for observations in the test set~\citep{robins1995semiparametric,little2002statistical}, because the partition into calibration versus test sets is generated by an independent Bernoulli assignment that does not depend on either $Y_i$ or $\hat Y_i$. 
For asymptotic analysis, we consider a regime in which both $m$ and $n$ diverge with
$\frac{n}{m} \;\to\; \gamma_1 \in (0,\infty).$ That is, the proportion of observations in the calibration set converges to a fixed constant $\frac{1}{1+\gamma_1} \in (0,1)$.

Denote the empirical proportion of LLM-judged positives in the test set by
\begin{equation}
\hat\theta_{\mathrm{naive}}
\;=\;
\frac{1}{n}\sum_{i=1}^n \widehat Y_i .
\label{eq:theta-naive}
\end{equation}
When the judge is imperfect (i.e., $q_0+q_1<2$), we have that $\E[\hat\theta_{\mathrm{naive}}]=\theta+(q_0 + q_1 - 2)\theta + (1 - q_0)$, which leads to biased point estimates, incorrect variance estimates, and invalid inference. We therefore review several calibration-based/debiasing estimators: the Rogan--Gladen (RG) estimator~\citep{rogan1978estimating,lee2025correctly}, prediction-powered inference (PPI)~\citep{angelopoulos2023prediction}, and its power-tuned variant (\texttt{PPI++})~\citep{angelopoulos2023ppipp}.

\subsection{Rogan--Gladen estimator}
\label{subsec:RG}

The Rogan--Gladen estimator accounts for the sensitivity and specificity of the LLM judge,
\begin{equation}
q_1 = \Pr(\hat Y = 1 \mid Y = 1),
\qquad
q_0 = \Pr(\hat Y = 0 \mid Y = 0).
\label{eq:q-param}
\end{equation}
Let $p = \Pr(\hat Y = 1)$ denote the marginal probability that the LLM calls an instance positive. The classical misclassification relationship is
\begin{equation}
p = q_1 \theta + (1-q_0)(1-\theta)
  = (q_0 + q_1 - 1)\theta + (1-q_0).
\label{eq:misclass-eq}
\end{equation}
Solving \eqref{eq:misclass-eq} for $\theta$ yields %
$\theta = \frac{p + q_0 - 1}{q_0 + q_1 - 1}.$
In practice, $p$, $q_0$, and $q_1$ are unknown and estimated from the data: $
\hat p = \frac{1}{n}\sum_{i=1}^n \hat Y_i$.
On the calibration set, write $m_1 = \sum_{j=1}^m \mathbf{1}\{Y_j=1\}$ and $m_0 = m - m_1$ and define
\[
\hat q_1 = 
\frac{\sum_{j=1}^m \mathbf{1}\{\hat Y_j=1,Y_j=1\}}{m_1},
\qquad
\hat q_0 = 
\frac{\sum_{j=1}^m \mathbf{1}\{\hat Y_j=0,Y_j=0\}}{m_0}.
\]
The Rogan--Gladen estimator for $\theta$ is
\begin{equation}
\hat\theta_{\mathrm{RG}} = \frac{\hat p + \hat q_0 - 1}{\hat q_0 + \hat q_1 - 1}.
\label{eq:theta-RG}
\end{equation}

This is the classic prevalence estimator under measurement error~\citep{rogan1978estimating,lang2014confidence} and the same estimator proposed in \citet{lee2025correctly}.

\begin{proposition}[Asymptotic distribution of $\hat\theta_{\mathrm{RG}}$ in \eqref{eq:theta-RG}]
\label{prop:var-RG}
Assume the test and calibration samples are independent and  $q_0+q_1-1 \neq 0$ with $q_0$ and $q_1$ defined in \eqref{eq:q-param}. With $N=n+m \to \infty$ and $n/m\to \gamma_1$, we have that
\[
\sqrt{N}\,\bigl(\hat\theta_{\mathrm{RG}} - \theta\bigr)
\;\xrightarrow{d}\;
\mathcal{N}\bigl(0, V_{\mathrm{RG}}\bigr),
\]
where
\begin{equation}
V_{\mathrm{RG}}
=
\frac{1+\gamma_1}{\gamma_1}\frac{1}{(q_0+q_1-1)^2}
\left[
p(1-p)
\;+\;
\gamma_1\left\{
(1-\theta)\,q_0(1-q_0)
+
\theta\,q_1(1-q_1)
\right\}
\right].
\label{eq:RG-var}
\end{equation}
\end{proposition}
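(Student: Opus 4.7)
The plan is to write the estimator as a smooth function of three sample averages and then apply the multivariate delta method. Set $g(p, q_0, q_1) = (p + q_0 - 1)/(q_0 + q_1 - 1)$, so that $\hat\theta_{\mathrm{RG}} = g(\hat p, \hat q_0, \hat q_1)$ and $\theta = g(p, q_0, q_1)$ by the Rogan--Gladen identity \eqref{eq:misclass-eq}. The map $g$ is continuously differentiable in a neighborhood of $(p, q_0, q_1)$ since $q_0 + q_1 - 1 \ne 0$.

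First, I would establish marginal CLTs for the three estimates at their natural rates. The estimate $\hat p$ is an average of $n$ i.i.d.\ Bernoulli$(p)$ random variables on the test set, so $\sqrt{n}(\hat p - p) \xrightarrow{d} \cN(0, p(1-p))$. For $\hat q_1$, the cleanest route is to condition on $m_1 = \sum_{j=1}^m Y_j$: given $m_1$, the numerator $m_1 \hat q_1$ is Binomial$(m_1, q_1)$, and $m_1/m \xrightarrow{p} \theta$, so that $\sqrt{m}(\hat q_1 - q_1) \xrightarrow{d} \cN(0, q_1(1-q_1)/\theta)$; equivalently, writing $\hat q_1 = \overline{A}_m/\overline{B}_m$ with $A_j = \mathbf{1}\{\hat Y_j = 1, Y_j = 1\}$ and $B_j = Y_j$, the delta method on the ratio gives the same answer using $\E[A] = \theta q_1$, $\E[B] = \theta$, and $\Cov(A,B) = \theta q_1(1-\theta)$. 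Analogously, $\sqrt{m}(\hat q_0 - q_0) \xrightarrow{d} \cN(0, q_0(1-q_0)/(1-\theta))$.

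Next I would assemble the joint CLT. The test and calibration sets are independent by assumption, so $\hat p$ is independent of $(\hat q_0, \hat q_1)$. Within the calibration set, $\hat q_1$ and $\hat q_0$ are computed from disjoint strata ($Y_j = 1$ vs.\ $Y_j = 0$), so conditional on $(m_0, m_1)$ they are independent, and their asymptotic covariance is zero. Since $n/m \to \gamma_1$ gives $N/n \to (1+\gamma_1)/\gamma_1$ and $N/m \to 1+\gamma_1$, the joint limit at the $\sqrt{N}$ rate is diagonal with entries
\begin{equation*}
\tfrac{1+\gamma_1}{\gamma_1}\,p(1-p),
\qquad
(1+\gamma_1)\,\tfrac{q_0(1-q_0)}{1-\theta},
\qquad
(1+\gamma_1)\,\tfrac{q_1(1-q_1)}{\theta}.
\end{equation*}

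Finally, I would apply the delta method. Differentiating $g$ and using $(p + q_0 - 1) = \theta(q_0 + q_1 - 1)$ to simplify $\partial_{q_1} g$, and $q_1 - p = (1-\theta)(q_0+q_1-1)$ to simplify $\partial_{q_0} g$, the gradient at the truth collapses to
\begin{equation*}
\nabla g = \frac{1}{q_0+q_1-1}\bigl(1,\; 1-\theta,\; -\theta\bigr).
\end{equation*}
Substituting the diagonal variance and simplifying yields exactly the expression \eqref{eq:RG-var}. The main obstacle is bookkeeping the random denominators $m_0, m_1$ when deriving the CLTs for $\hat q_0, \hat q_1$; the conditioning-on-$m_1$ argument sidesteps this cleanly, and the rest is mechanical delta-method algebra aided by the two identities above, which make the final gradient unexpectedly simple.
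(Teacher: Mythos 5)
Your proposal is correct and follows essentially the same route as the paper: a trivariate CLT for $(\hat p,\hat q_0,\hat q_1)$ with diagonal limiting covariance (entries $p(1-p)$, $q_0(1-q_0)/(1-\theta)$, $q_1(1-q_1)/\theta$ at the appropriate rates), then the multivariate delta method with the same gradient $\tfrac{1}{q_0+q_1-1}(1,\,1-\theta,\,-\theta)$ and rescaling to the $\sqrt{N}$ rate. The only cosmetic difference is that you derive the limits for $\hat q_0,\hat q_1$ by conditioning on the class counts (or a ratio-of-means delta method), whereas the paper works with the multinomial cell proportions and a Jacobian computation; both give the same diagonal covariance and the identical final expression for $V_{\mathrm{RG}}$.
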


\subsection{Prediction-powered inference (PPI)}
\label{subsec:PPI}

Prediction-powered inference treats the LLM label as a surrogate for $Y$ and corrects its bias using the calibration set without explicitly modeling the sensitivity and specificity. Following the setup in Section~\ref{subsec:RG}, we have that $\E[\hat Y] = p,\,
\E[\hat Y - Y] = \Delta = p-\theta$, motivating the following estimator:
\begin{equation}
\hat\theta_{\mathrm{PPI}} = \hat p - \frac{1}{m}\sum_{j=1}^m (\hat Y_j - Y_j).
\label{eq:theta-PPI}
\end{equation}

\begin{proposition}
\label{prop:PPI-bias-var}
Under the joint model \eqref{eq:model-gen} with independent test and calibration samples:
\begin{enumerate}[label=(\roman*),leftmargin=1.5em]
\item PPI is unbiased for all $n,m$:
\[
\E[\hat\theta_{\mathrm{PPI}}] = \theta.
\]

\item The finite-sample variance is
\begin{equation}
\Var(\hat\theta_{\mathrm{PPI}})
=
\frac{p(1-p)}{n}
+
\frac{(1-\theta)(1-q_0)+\theta(1-q_1)-(\theta-p)^2}{m},
\label{eq:ppi-var}
\end{equation}
where $p=\Pr(\hat Y=1)=(1-\theta)(1-q_0)+\theta q_1$.
If $N=n+m\to\infty$ with $n/m\to\gamma_1\in(0,\infty)$,
\[
\sqrt{N}\,(\hat\theta_{\mathrm{PPI}}-\theta)
\xrightarrow{d}
\mathcal{N}(0,V_{\mathrm{PPI}}),\;V_{\mathrm{PPI}}
=
\left(\frac{1+\gamma_1}{\gamma_1}\right)\left\{p(1-p)
+
\gamma_1\Bigl[(1-\theta)(1-q_0)+\theta(1-q_1)-(\theta-p)^2\Bigr]\right\}.
\]
\end{enumerate}
\end{proposition}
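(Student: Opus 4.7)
The proof splits naturally along the two independent data sources: the test set, which contributes $\hat p = n^{-1}\sum_{i=1}^n \hat Y_i$, and the calibration set, which contributes $\hat\Delta = m^{-1}\sum_{j=1}^m (\hat Y_j - Y_j)$. I would start by recording unbiasedness of each piece: $\E[\hat p]=p$ and $\E[\hat Y_j - Y_j] = p-\theta$, so by linearity $\E[\hat\theta_{\mathrm{PPI}}] = p - (p-\theta) = \theta$, which gives part (i) without any asymptotics.

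For part (ii), independence of the test and calibration samples lets me write
\[
\Var(\hat\theta_{\mathrm{PPI}}) \;=\; \Var(\hat p) + \Var(\hat\Delta) \;=\; \frac{p(1-p)}{n} + \frac{\Var(\hat Y - Y)}{m},
\]
using that $\hat Y_i \sim \mathrm{Bern}(p)$ for the first term. The one calculation that needs care is $\Var(\hat Y - Y)$. Because $\hat Y, Y \in \{0,1\}$, the squared difference collapses to an indicator: $(\hat Y - Y)^2 = \ind{\hat Y \neq Y}$, whose expectation equals $(1-q_0)(1-\theta) + (1-q_1)\theta$ by the definitions of $q_0, q_1$ in \eqref{eq:q-param}. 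Subtracting $(\E[\hat Y - Y])^2 = (p-\theta)^2$ yields exactly the numerator in the $1/m$ term of \eqref{eq:ppi-var}.

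For the asymptotic statement, I would multiply through by $\sqrt{N}$ and treat the two components as independent centered averages. Writing
\[
\sqrt{N}\,(\hat\theta_{\mathrm{PPI}} - \theta) \;=\; \sqrt{N/n}\cdot \sqrt{n}(\hat p - p) \;-\; \sqrt{N/m}\cdot \sqrt{m}(\hat\Delta - (p-\theta)),
\]
the classical CLT applied to each i.i.d.\ average gives joint convergence to independent normals with variances $p(1-p)$ and $\Var(\hat Y - Y)$, respectively. Using $n/m \to \gamma_1$ gives $N/n \to (1+\gamma_1)/\gamma_1$ and $N/m \to 1+\gamma_1$, and Slutsky combined with independence collapses the limit into a single $\mathcal{N}(0, V_{\mathrm{PPI}})$ with
\[
V_{\mathrm{PPI}} = \frac{1+\gamma_1}{\gamma_1}\, p(1-p) + (1+\gamma_1)\bigl[(1-\theta)(1-q_0)+\theta(1-q_1)-(p-\theta)^2\bigr],
\]
which after factoring matches the displayed formula.

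The proof is largely mechanical, so there is no serious obstacle; the only step that requires a small bit of care beyond bookkeeping is recognizing $(\hat Y - Y)^2 = \ind{\hat Y \neq Y}$ to evaluate the calibration-set variance cleanly in terms of the sensitivity/specificity parameters rather than via the joint distribution directly. Everything else follows from linearity, independence of the two subsamples, and a standard two-sample CLT argument.
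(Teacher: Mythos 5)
Your proposal is correct and follows essentially the same route as the paper's proof: decompose $\hat\theta_{\mathrm{PPI}} = \hat p - \hat\Delta$, use independence of the two samples, evaluate $\Var(\hat Y - Y)$ via the identity $(\hat Y - Y)^2 = \ind{\hat Y \neq Y}$, and finish with the CLT plus Slutsky under $n/m \to \gamma_1$. The factored asymptotic variance you obtain matches the paper's $V_{\mathrm{PPI}}$ exactly, so there is nothing to fix.
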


\subsection{Power-tuned PPI (\texttt{PPI++})}
\label{subsec:PPIplus}
\texttt{PPI++} generalizes \eqref{eq:theta-PPI} by introducing a tuning parameter $\lambda\in\mathbb{R}$:
\begin{equation}
\hat\theta_{\mathrm{\texttt{PPI++}}}(\lambda)
:=
\frac{1}{m}\sum_{j=1}^m Y_j
\;+\;
\lambda\left(
\frac{1}{n}\sum_{i=1}^n \hat Y_i
\;-\;
\frac{1}{m}\sum_{j=1}^m \hat Y_j
\right).
\label{eq:theta-PPIplus}
\end{equation}
It's easy to see that \eqref{eq:theta-PPIplus} remains unbiased for any choice of $\lambda$ that is independent of the $Y_i$'s, but different choices lead to different asymptotic variances (and therefore different power), as detailed in Proposition~\ref{prop:PPIplus-bias-var}.

\begin{proposition}
\label{prop:PPIplus-bias-var}
Assume $\lambda$ is fixed (non-data-dependent), under the joint model \eqref{eq:model-gen} with independent test and calibration samples. Then:
\begin{enumerate}[label=(\roman*),leftmargin=1.5em]
\item \emph{(Unbiasedness)} For all $n,m$,
\[
\E\!\left[\hat\theta_{\mathrm{\texttt{PPI++}}}(\lambda)\right] = \theta.
\]

\item \emph{(Finite-sample variance)}
\begin{equation}
\Var\!\left(\hat\theta_{\mathrm{\texttt{PPI++}}}(\lambda)\right)
=
\frac{\theta(1-\theta)}{m}
+
\lambda^2\left(\frac{p(1-p)}{n}+\frac{p(1-p)}{m}\right)
-
\frac{2\lambda}{m}\,\theta\bigl(q_1-p\bigr).
\label{eq:ppiplus-var-finite}
\end{equation}

\item \emph{(Asymptotic distribution)}
If $N=n+m\to\infty$ with $n/m\to\gamma_1\in(0,\infty)$, then
\[
\sqrt{N}\left(\hat\theta_{\mathrm{\texttt{PPI++}}}(\lambda)-\theta\right)
\xrightarrow{d}
\mathcal N\!\left(0,\,V_{\mathrm{\texttt{PPI++}}}(\lambda)\right),
\]
where
\begin{equation}
V_{\mathrm{\texttt{PPI++}}}(\lambda)
=
\frac{1+\gamma_1}{\gamma_1}\left[\gamma_1\,\theta(1-\theta)
+
\lambda^2(1+\gamma_1)\,p(1-p)
-
2\lambda\,\gamma_1\,\theta\bigl(q_1-p\bigr)\right].
\label{eq:ppiplus-var-asymp}
\end{equation}

\item \emph{(Variance-minimizing tuning)}
$V_{\mathrm{\texttt{PPI++}}}(\lambda)$ is minimized over $\lambda\in\mathbb{R}$ at
\[
\lambda^\star
=
\frac{\gamma_1\,\theta\bigl(q_1-p\bigr)}{(1+\gamma_1)\,p(1-p)}.
\]
\end{enumerate}
\end{proposition}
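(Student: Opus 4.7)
The plan is to handle the four claims in order, all following directly from properties of independent sample means computed on the test and calibration sets.

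For (i), I would write $\hat\theta_{\texttt{PPI++}}(\lambda) = \bar Y^{\mathrm{cal}} + \lambda(\bar{\hat Y}^{\mathrm{test}} - \bar{\hat Y}^{\mathrm{cal}})$, where the superscripts indicate the subsample. Since $\E[\bar Y^{\mathrm{cal}}]=\theta$ and $\E[\bar{\hat Y}^{\mathrm{test}}] = \E[\bar{\hat Y}^{\mathrm{cal}}] = p$, linearity of expectation kills the correction term for any fixed $\lambda$.

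For (ii), the key observation is that the test and calibration sets are independent, so $\bar{\hat Y}^{\mathrm{test}}$ is independent of $(\bar Y^{\mathrm{cal}},\bar{\hat Y}^{\mathrm{cal}})$. I would split
\[
\Var(\hat\theta_{\texttt{PPI++}}(\lambda)) = \Var\!\bigl(\bar Y^{\mathrm{cal}} - \lambda\bar{\hat Y}^{\mathrm{cal}}\bigr) + \lambda^2\,\Var(\bar{\hat Y}^{\mathrm{test}}).
\]
The second term equals $\lambda^2 p(1-p)/n$. For the first, I would expand the variance of a linear combination, using $\Var(\bar Y^{\mathrm{cal}}) = \theta(1-\theta)/m$, $\Var(\bar{\hat Y}^{\mathrm{cal}}) = p(1-p)/m$, and $\Cov(Y,\hat Y) = \E[Y\hat Y] - \theta p = \Pr(Y=1,\hat Y=1) - \theta p = \theta q_1 - \theta p = \theta(q_1-p)$. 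Combining yields \eqref{eq:ppiplus-var-finite}.

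For (iii), since the estimator is a linear combination of two independent sample means (one from the test sample of size $n$, one from the calibration sample of size $m$), I would apply a standard bivariate CLT to each block separately and then combine: $\sqrt{n}(\bar{\hat Y}^{\mathrm{test}} - p) \Rightarrow \mathcal{N}(0,p(1-p))$ and $\sqrt{m}(\bar Y^{\mathrm{cal}}-\theta,\bar{\hat Y}^{\mathrm{cal}}-p)^\top \Rightarrow \mathcal{N}(0,\Sigma_{\mathrm{cal}})$ with $\Sigma_{\mathrm{cal}}$ having entries $\theta(1-\theta)$, $p(1-p)$, and off-diagonal $\theta(q_1-p)$. Multiplying the finite-sample variance from (ii) by $N$ and using $N/m\to 1+\gamma_1$, $N/n\to(1+\gamma_1)/\gamma_1$ gives \eqref{eq:ppiplus-var-asymp}. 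Minor care is needed to assemble these into a joint statement since the three averages share the calibration block, but the independence across subsamples makes this routine.

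For (iv), $V_{\texttt{PPI++}}(\lambda)$ is a strictly convex quadratic in $\lambda$ since the coefficient on $\lambda^2$, namely $\frac{(1+\gamma_1)^2}{\gamma_1}p(1-p)$, is strictly positive (assuming the judge is not degenerate so that $p\in(0,1)$). I would simply differentiate \eqref{eq:ppiplus-var-asymp} in $\lambda$, set the derivative to zero, and solve, obtaining
\[
\lambda^\star = \frac{\gamma_1\,\theta(q_1-p)}{(1+\gamma_1)\,p(1-p)}.
\]
None of the steps are genuinely difficult; the one thing to be careful about is the calibration-set covariance $\Cov(Y,\hat Y) = \theta(q_1-p)$, which is where the asymmetry between sensitivity and specificity surfaces and which drives the sign and magnitude of $\lambda^\star$.
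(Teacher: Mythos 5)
Your proposal is correct and follows essentially the same route as the paper: decompose the estimator into the independent test-sample mean $\lambda\bar{\hat Y}^{\mathrm{test}}$ and the calibration-sample linear combination $\bar Y^{\mathrm{cal}}-\lambda\bar{\hat Y}^{\mathrm{cal}}$, compute $\Cov(Y,\hat Y)=\theta(q_1-p)$, apply the CLT blockwise with Slutsky and the $n/m\to\gamma_1$ scaling, and minimize the resulting quadratic in $\lambda$. The only cosmetic difference is that the paper packages the calibration block as the i.i.d.\ variables $W_j=Y_j-\lambda\hat Y_j$ before invoking the CLT, which is the same calculation you describe.
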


\subsection{Maximum Likelihood Estimator}
\label{subsec:full-mle}

In addition to (moment-based) debiasing estimators such as PPI/\texttt{PPI++}, another natural
candidate is the \emph{joint maximum likelihood estimator} under the full three-parameter
misclassification model for $(Y,\hat Y)$. In contrast to RG (which uses plug-in estimators for sensitivity and specificity on the calibration sample), the MLE explicitly fits $(\theta,q_0,q_1)$ by maximizing the joint
likelihood implied by the model across both the test and calibration samples.

Under model \eqref{eq:model-gen}, we have that the test observations satisfy $\hat Y_i\sim\mathrm{Bern}(p)$,
while the calibration observations satisfy
\[
\Pr(Y=1,\hat Y=1)=\theta q_1,\qquad
\Pr(Y=1,\hat Y=0)=\theta(1-q_1),
\]
\[
\Pr(Y=0,\hat Y=1)=(1-\theta)(1-q_0),\qquad
\Pr(Y=0,\hat Y=0)=(1-\theta)q_0.
\] 

\paragraph{Likelihood and estimating equations.}
Let $O_i=(R_i,Y_i,\hat Y_i)$ for $i=1,\dots,n+m$, where $R_i\in\{0,1\}$ indicates whether $Y_i$ is observed
($R_i=1$ for calibration and $R_i=0$ for test). In this setup we assume $R_i$'s are simply i.i.d. draws with 
$\Pr(R=1)=m/(n+m)$, and the observed-data likelihood factorizes into a calibration part (for $R=1$) and a
test-only part (for $R=0$). Writing $p=(1-\theta)(1-q_0)+\theta q_1$, the per-observation contribution is
\[
f(O_i;\theta,q_0,q_1)
=
\Bigl[\Pr(Y_i,\hat Y_i)\Bigr]^{R_i}\Bigl[\Pr(\hat Y_i)\Bigr]^{1-R_i},
\]
so the full log-likelihood (up to constants) is
\begin{align}
\ell(\theta,q_0,q_1)
&=
\sum_{i=1}^{n+m}(1-R_i)\Bigl\{\hat Y_i\log p+(1-\hat Y_i)\log(1-p)\Bigr\} \nonumber\\
&\quad+
\sum_{i=1}^{n+m}R_i\Bigl\{
\mathbf 1\{Y_i=1,\hat Y_i=1\}\log(\theta q_1)
+\mathbf 1\{Y_i=1,\hat Y_i=0\}\log(\theta(1-q_1)) \nonumber\\
&\qquad
+\mathbf 1\{Y_i=0,\hat Y_i=1\}\log((1-\theta)(1-q_0))
+\mathbf 1\{Y_i=0,\hat Y_i=0\}\log((1-\theta)q_0)
\Bigr\}.
\label{eq:mle-loglik-R}
\end{align}
The MLE is defined as
\[
(\hat\theta_{\mathrm{MLE}},\hat q_{0,\mathrm{MLE}},\hat q_{1,\mathrm{MLE}})
=
\arg\max_{\theta,q_0,q_1\in(0,1)}\ \ell(\theta,q_0,q_1),
\]
equivalently as a solution to the score equations
\[
\frac{\partial}{\partial\theta}\ell(\theta,q_0,q_1)=0,\qquad
\frac{\partial}{\partial q_0}\ell(\theta,q_0,q_1)=0,\qquad
\frac{\partial}{\partial q_1}\ell(\theta,q_0,q_1)=0.
\]

In practice, these equations are solved numerically (e.g., by Newton--Raphson), subject to the
constraint $q_0+q_1\neq 1$ (identifiability).

\begin{proposition}[Asymptotic normality and efficiency of the MLE]
\label{prop:mle-asymp}
Assume the three-parameter misclassification model for $(Y,\hat Y, R)$ is correctly specified and standard
regularity conditions hold. Suppose $N=n+m\to\infty$ with $n/m\to\gamma_1\in(0,\infty)$, and define
\[
p=(1-\theta)(1-q_0)+\theta q_1,
\qquad
\eta=(\theta,q_0,q_1)^\top.
\]
Denote the MLE based on the combined test ($n$) and calibration ($m$) samples as
\[
\hat\eta_{\mathrm{MLE}}
:=
(\hat\theta_{\mathrm{MLE}},\hat q_{0,\mathrm{MLE}},\hat q_{1,\mathrm{MLE}})^\top.
\]
Then
\[
\sqrt{N}\,(\hat\eta_{\mathrm{MLE}}-\eta)\xrightarrow{d}\mathcal N\!\left(0,\ I_{\gamma_1}(\eta)^{-1}\right),
\]
where the limiting Fisher information is
\[
I_{\gamma_1}(\theta,q_0,q_1)
=
\frac{\gamma_1}{1+\gamma_1}\,I_{\mathrm{test}}(\theta,q_0,q_1)
+
\frac{1}{1+\gamma_1}\,I_{\mathrm{cal}}(\theta,q_0,q_1),
\]
with
\[
I_{\mathrm{test}}(\theta,q_0,q_1)
=
\frac{1}{p(1-p)}
\begin{pmatrix}
(q_0+q_1-1)^2 & -(q_0+q_1-1)(1-\theta) & \theta(q_0+q_1-1)\\
-(q_0+q_1-1)(1-\theta) & (1-\theta)^2 & -\theta(1-\theta)\\
\theta(q_0+q_1-1) & -\theta(1-\theta) & \theta^2
\end{pmatrix},
\]
\[
I_{\mathrm{cal}}(\theta,q_0,q_1)
=
\mathrm{diag}\!\left(
\frac{1}{\theta(1-\theta)},\ 
\frac{1-\theta}{q_0(1-q_0)},\ 
\frac{\theta}{q_1(1-q_1)}
\right).
\]
Consequently,
\[
\sqrt{N}\,(\hat\theta_{\mathrm{MLE}}-\theta)\xrightarrow{d}\mathcal N\!\left(0,\ \bigl[I_{\gamma_1}(\eta)^{-1}\bigr]_{11}\right), \; \text{where}
\]
\[
\bigl[I_{\gamma_1}(\eta)^{-1}\bigr]_{11}
=
(1+\gamma_1)\,\theta(1-\theta)\,
\frac{\,p(1-p)+\gamma_1\Bigl[(1-\theta)q_0(1-q_0)+\theta q_1(1-q_1)\Bigr]\;}
{\,p(1-p)+\gamma_1\Bigl[(q_0+q_1-1)^2\theta(1-\theta)+(1-\theta)q_0(1-q_0)+\theta q_1(1-q_1)\Bigr]\;}.
\]
\end{proposition}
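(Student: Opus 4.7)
The plan is to invoke standard parametric MLE asymptotic theory for the i.i.d.\ observations $O_i = (R_i, Y_i, \hat Y_i)$ with the per-observation log-density implicit in \eqref{eq:mle-loglik-R}. The identifiability constraint $q_0 + q_1 \neq 1$ ensures $\nabla_\eta p \neq 0$, which combined with interiority of $\eta$ in $(0,1)^3$ yields a nonsingular Fisher information; standard regularity then gives $\sqrt{N}(\hat\eta_{\mathrm{MLE}} - \eta) \xrightarrow{d} \mathcal{N}(0, I(\eta)^{-1})$, where $I(\eta)$ is the per-observation Fisher information. Because $R_i$ is independent of $(Y_i, \hat Y_i)$ with $\Pr(R_i = 1) = m/(m+n) \to 1/(1+\gamma_1)$, the per-observation information decomposes as $I(\eta) = \frac{\gamma_1}{1+\gamma_1} I_{\mathrm{test}}(\eta) + \frac{1}{1+\gamma_1} I_{\mathrm{cal}}(\eta)$, which is precisely $I_{\gamma_1}(\eta)$ in the statement.

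Next, I would compute the two pieces directly from the conditional likelihoods. For the test contribution, $\hat Y \mid R = 0 \sim \mathrm{Bern}(p)$ with $p = (1-\theta)(1-q_0) + \theta q_1$, so the score equals $\frac{\hat Y - p}{p(1-p)} \nabla_\eta p$ and
\[
I_{\mathrm{test}} = \frac{1}{p(1-p)}\, v v^\top, \qquad v = \nabla_\eta p = (q_0 + q_1 - 1,\; -(1-\theta),\; \theta)^\top,
\]
recovering the rank-one matrix in the proposition. For the calibration contribution, I would factor the joint as $\Pr(Y;\theta)\,\Pr(\hat Y \mid Y; q_0, q_1)$: the $\theta$-score depends only on $Y$, while the $q_0$- and $q_1$-scores are supported on the disjoint events $\{Y=0\}$ and $\{Y=1\}$. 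These three scores are mutually uncorrelated, and a per-factor Bernoulli Fisher-information calculation produces the stated diagonal form for $I_{\mathrm{cal}}$.

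For the $(1,1)$ entry of the inverse, I would exploit the rank-one structure of $I_{\mathrm{test}}$. Setting $A = (1+\gamma_1)\, I_{\gamma_1}(\eta) = I_{\mathrm{cal}} + \tfrac{\gamma_1}{p(1-p)}\, v v^\top$, the Sherman--Morrison identity gives
\[
[A^{-1}]_{11} = (I_{\mathrm{cal}}^{-1})_{11} - \frac{(\gamma_1/p(1-p))\,[(I_{\mathrm{cal}}^{-1} v)_1]^2}{1 + (\gamma_1/p(1-p))\, v^\top I_{\mathrm{cal}}^{-1} v}.
\]
Only three scalars are required, all read off from the diagonal of $I_{\mathrm{cal}}^{-1}$: $(I_{\mathrm{cal}}^{-1})_{11} = \theta(1-\theta)$, $(I_{\mathrm{cal}}^{-1} v)_1 = \theta(1-\theta)(q_0 + q_1 - 1)$, and $v^\top I_{\mathrm{cal}}^{-1} v = (q_0+q_1-1)^2 \theta(1-\theta) + (1-\theta)\, q_0(1-q_0) + \theta\, q_1(1-q_1)$. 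Clearing the factor $p(1-p)$ and applying $[I_{\gamma_1}^{-1}]_{11} = (1+\gamma_1)\,[A^{-1}]_{11}$ then delivers the stated closed form.

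The main obstacle is purely algebraic and lives in the last step: after expanding $\theta(1-\theta)\bigl[p(1-p) + \gamma_1\, v^\top I_{\mathrm{cal}}^{-1} v\bigr]$ and subtracting the Sherman--Morrison correction $\gamma_1\, [\theta(1-\theta)]^2 (q_0+q_1-1)^2$, the two $(q_0+q_1-1)^2$ contributions must cancel exactly, leaving $p(1-p) + \gamma_1\bigl[(1-\theta)\, q_0(1-q_0) + \theta\, q_1(1-q_1)\bigr]$ in the numerator while the denominator retains the full $v^\top I_{\mathrm{cal}}^{-1} v$ summand. I would present this cancellation as a single aligned display so that the matching between my expression and the one stated in the proposition can be verified at a glance.
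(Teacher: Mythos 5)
Your proposal is correct and follows essentially the same route as the paper's proof: per-observation score calculus giving the rank-one test information $\frac{1}{p(1-p)}vv^\top$ and (via vanishing cross-moments) the diagonal calibration information, weighted combination under the MCAR labeling fraction, and Sherman--Morrison applied to the rank-one update to extract the $(1,1)$ entry, with the same cancellation of the $(q_0+q_1-1)^2\theta(1-\theta)$ term in the numerator. The only difference is cosmetic: you rescale to $A=(1+\gamma_1)I_{\gamma_1}$ before inverting, whereas the paper writes $I_{\gamma_1}=D+uu^\top$ with $D=\frac{1}{1+\gamma_1}I_{\mathrm{cal}}$, which is the same computation.
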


\section{A principled efficiency lens via efficient influence functions}
\label{sec:eif}

We now have multiple asymptotically unbiased estimators for $\theta=\E[Y]$ (RG, PPI,
\texttt{PPI++}, and the MLE). This motivates the natural question:
\begin{quote}
    \emph{What is the smallest achievable asymptotic variance under a given statistical model, and which
estimators attain it?}
\end{quote}

We answer this question using semiparametric efficiency theory in statistics (see~\citet{bickel1993efficient,van2000asymptotic} and references therein for a detailed treatment). Let
$Z_1,\dots,Z_N$ denote i.i.d.\ observations from a distribution $P$ and let $\theta=\theta(P)$ be the
target parameter. An estimator $\hat\theta$ is \emph{regular and asymptotically linear} if there exists
a mean-zero function $\phi_P(Z)$ such that $
\hat\theta-\theta(P)
=
\frac{1}{N}\sum_{i=1}^N \phi_P(Z_i) + o_p(N^{-1/2})$.
Any such $\phi_P$ is called an \emph{influence function} (IF) for $\hat\theta$, and a direct application of the central limit theorem yields that
\[
\sqrt{N}\bigl(\hat\theta-\theta(P)\bigr)\ \xrightarrow{d}\ 
\mathcal N\!\left(0,\ \Var_P\!\bigl(\phi_P(Z)\bigr)\right).
\]

Among all IFs, the \emph{efficient influence function} (EIF) for $\theta$ under $P$---when it exists---is the unique influence function with the smallest asymptotic variance. This minimal variance is referred to as the \emph{semiparametric efficiency bound}. An estimator whose influence function coincides with the EIF attains this bound and is therefore said to be \emph{semiparametrically efficient}.

Specifically, let's consider the general model for the joint distribution of $(Y,\hat Y, R)\sim P$, where
$Y\in\{0,1\}$ is the human label; $\hat Y\in\{0,1\}$ is the LLM-judged label; and $R \in \{0,1\}$ indicates whether an observation is in the calibration (observed $Y$) or the test (missing $Y$) set, with no additional constraints beyond the i.i.d. assumption across observations and $R$ is independent of $Y$ and $\hat{Y}$. Recall that we observe $n$ test points
with $\hat Y$ only and $m$ calibration points with both $(Y,\hat Y)$. Let $R\in\{0,1\}$ indicate whether
$Y$ is observed ($R=1$ for calibration and $R=0$ for test with $\Pr(R=1)=\frac{m}{m+n}$), and define
\[
\mu(\hat y):=\E[Y\mid \hat Y=\hat y],\qquad \hat y\in\{0,1\}.
\]

A useful way to motivate the EIF is to observe that there are two natural sources of information about $\theta=\E[Y]$ in this design:
(i) the large test sample provides a precise estimate of the distribution of $\hat Y$; and
(ii) the calibration sample tells us how $Y$ relates to $\hat Y$.
The function $\mu(\hat Y)$ is exactly the \emph{best} way to ``translate'' $\hat Y$ into an estimate of $Y$:
by the law of iterated expectations, $\theta=\E[Y]=\E\{\E[Y\mid \hat Y]\}=\E[\mu(\hat Y)]$.
Thus a natural ``impute-then-average'' estimator is to average $\mu(\hat Y)$ over the (large) test set.
However, since $\mu(\cdot)$ is unknown, we need to learn it from the calibration sample.

\begin{proposition}[Efficient influence function]
\label{prop:eif-modelA}
Given $(Y,\hat Y, R)\sim P$ with $R$ independent of $Y$ and $\hat Y$, the efficient influence function for $\theta=\E[Y]$ under $P$ is
\begin{equation}
\phi^\star(R,Y,\hat Y; P)
=
\mu(\hat Y)-\theta
+\,\frac{R}{\Pr(R=1)}\bigl\{Y-\mu(\hat Y)\bigr\}.
\label{eq:eif-modelA}
\end{equation}
\end{proposition}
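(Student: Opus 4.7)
The plan is to identify $\phi^\star$ as the unique $L^2(P)$ representer of the pathwise derivative of $\theta(P)$ within the observed-data tangent space; by the standard characterization (e.g., Chapter 25 of \citet{van2000asymptotic}), such a representer is the efficient influence function.

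First, I would parameterize the observed-data model. Because $R$ is independent of $(Y,\hat Y)$ and $Y$ is observed only when $R=1$, the observed-data log-density factorizes (up to constants) as $\log\Pr(R)+\log\Pr(\hat Y)+R\log\Pr(Y\mid\hat Y)$. Any regular parametric submodel therefore has observed-data score
\[
s^{\mathrm{obs}}(R,Y,\hat Y) = s_R(R)+s_{\hat Y}(\hat Y)+R\,s_{Y\mid\hat Y}(Y,\hat Y),
\]
with $\E[s_R(R)]=0$, $\E[s_{\hat Y}(\hat Y)]=0$, and $\E[s_{Y\mid\hat Y}(Y,\hat Y)\mid\hat Y]=0$ almost surely. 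These three pieces are mutually $L^2(P)$-orthogonal and span the nonparametric tangent space $\mathcal T$.

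Second, I would compute the pathwise derivative of $\theta=\E[Y]$. Writing $Y\cdot s = Y\cdot s_{\hat Y}(\hat Y)+Y\cdot s_{Y\mid\hat Y}(Y,\hat Y)$ (the $s_R$ contribution vanishes since $\theta$ does not depend on $\Pr(R)$) and applying iterated expectations together with $\E[s_{\hat Y}]=0$ and $\E[s_{Y\mid\hat Y}\mid\hat Y]=0$ yields
\[
\dot\theta(s) = \E\bigl[(\mu(\hat Y)-\theta)\,s_{\hat Y}(\hat Y)\bigr] + \E\bigl[(Y-\mu(\hat Y))\,s_{Y\mid\hat Y}(Y,\hat Y)\bigr].
\]
Third, I would verify the two defining properties of $\phi^\star$ in \eqref{eq:eif-modelA}. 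For tangent-space membership, write $\phi^\star = a(\hat Y)+R\cdot b(Y,\hat Y)$ with $a(\hat Y)=\mu(\hat Y)-\theta$ and $b(Y,\hat Y)=(Y-\mu(\hat Y))/\Pr(R=1)$; then $\E[a(\hat Y)]=0$ by the tower property and $\E[b(Y,\hat Y)\mid\hat Y]=0$, matching the tangent structure. For the Riesz identity $\E[\phi^\star\cdot s^{\mathrm{obs}}]=\dot\theta(s)$, I would expand term by term using $R\perp(Y,\hat Y)$, $\E[R]=\Pr(R=1)$, and $R^2=R$: every cross term containing $s_R$ vanishes by independence and mean-zero scores; the term $\E[(\mu(\hat Y)-\theta)\,s_{\hat Y}(\hat Y)]$ recovers the first summand of $\dot\theta(s)$; and $\Pr(R=1)^{-1}\,\E[R(Y-\mu(\hat Y))\,s_{Y\mid\hat Y}]=\E[(Y-\mu(\hat Y))\,s_{Y\mid\hat Y}]$ recovers the second. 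Uniqueness of the Riesz representer in $\mathcal T$ then forces $\phi^\star$ to be the EIF.

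The main obstacle is bookkeeping the observed-data likelihood factorization correctly, in particular recognizing that the score for $\Pr(Y\mid\hat Y)$ enters only multiplied by $R$ because $Y$ appears in the observed data only through $RY$. This is precisely what produces the inverse-probability factor $R/\Pr(R=1)$ in $\phi^\star$; once that decomposition is in hand, every remaining step reduces to elementary iterated-expectation identities, and the proposed mean-zero property $\E[\phi^\star]=0$ is immediate from $\E[R]=\Pr(R=1)$ and $\E[Y-\mu(\hat Y)]=0$.
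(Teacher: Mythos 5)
Your proposal takes essentially the same route as the paper's proof: the same factorization of the observed-data likelihood into an $R$ part, an $\hat Y$ part, and an $R$-weighted $Y\mid\hat Y$ part, the same score decomposition $s_R(R)+s_{\hat Y}(\hat Y)+R\,s_{Y\mid\hat Y}(Y,\hat Y)$, the same pathwise-derivative computation, and the same term-by-term verification that $\phi^\star$ represents that derivative. The only difference is that you explicitly verify tangent-space membership of $\phi^\star$ (writing it as $a(\hat Y)+R\,b(Y,\hat Y)$ with the required mean-zero constraints), a point the paper leaves implicit in asserting that $\phi^\star$ is the canonical gradient; otherwise the arguments coincide.
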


\begin{proposition}
\label{prop:eif-estimator}
Define
\[
\hat\mu(1):=\frac{\sum_{j=1}^m \mathbf 1\{\hat Y_j=1\}\,Y_j}{\sum_{j=1}^m \mathbf 1\{\hat Y_j=1\}},\;
\hat\mu(0):=\frac{\sum_{j=1}^m \mathbf 1\{\hat Y_j=0\}\,Y_j}{\sum_{j=1}^m \mathbf 1\{\hat Y_j=0\}},\;
\hat\mu(\hat Y)=\hat\mu(0)\mathbf 1\{\hat Y=0\}+\hat\mu(1)\mathbf 1\{\hat Y=1\}.
\]
Then an asymptotically efficient estimator for estimating $\theta$ under \eqref{eq:model-gen} can be written as
\begin{equation}
\hat\theta_{\mathrm{EIF}}
=
\frac{1}{N}\sum_{i=1}^{N}\hat\mu(\hat Y_i)
\;+\;
\frac{1}{m}\sum_{j=1}^m\Bigl(Y_j-\hat\mu(\hat Y_j)\Bigr),
\label{eq:eif-est}
\end{equation}
with the asymptotic distribution 
\begin{equation}
\sqrt{N}\bigl(\hat\theta_{\mathrm{EIF}} - \theta\bigr) \xrightarrow{d} \mathcal{N}\bigl(0, V_{\mathrm{EIF}}\bigr),
    \end{equation}
  where the variance in the binary case simplifies to
    \begin{equation}
        \label{eq:eif-asymp}
  V_{\mathrm{EIF}} = 
  \frac{\theta(1-\theta)}{p(1-p)} \left[\theta(1-\theta)(q_0+q_1-1)^2 + {(\gamma_1+1)}\bigl(q_1(1-q_1)\theta + q_0(1-q_0)(1-\theta)\bigr)\right].
  \end{equation}
\end{proposition}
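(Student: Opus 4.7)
The strategy is to show that $\hat\theta_{\mathrm{EIF}}$ is asymptotically linear with influence function equal to the EIF $\phi^\star$ from Proposition~\ref{prop:eif-modelA}; efficiency, asymptotic normality, and the closed-form variance then all follow from that single expansion plus algebra. The key simplifying observation is that in the binary setting $\hat\mu$ is saturated nonparametrically: by construction, $m_1\hat\mu(1) = \sum_{j:\hat Y_j=1}Y_j$ and $m_0\hat\mu(0)=\sum_{j:\hat Y_j=0}Y_j$, so $\sum_{j=1}^m\hat\mu(\hat Y_j)=\sum_{j=1}^m Y_j$ and the augmentation term in \eqref{eq:eif-est} vanishes identically in finite samples. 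Hence
\[
\hat\theta_{\mathrm{EIF}} \;=\; \frac{1}{N}\sum_{i=1}^N \hat\mu(\hat Y_i) \;=\; \bar p\,\hat\mu(1) + (1-\bar p)\,\hat\mu(0),
\]
where $\bar p := N^{-1}\sum_i \mathbf{1}\{\hat Y_i = 1\}$, reducing the analysis to a delta-method expansion around the true $p$ and $\mu$.

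Using $\theta = p\,\mu(1) + (1-p)\,\mu(0)$, I decompose
\[
\hat\theta_{\mathrm{EIF}} - \theta \;=\; (\bar p - p)\bigl[\mu(1)-\mu(0)\bigr] \;+\; \bar p\,\bigl[\hat\mu(1)-\mu(1)\bigr] \;+\; (1-\bar p)\,\bigl[\hat\mu(0)-\mu(0)\bigr],
\]
and match each piece to a term of $\phi^\star$. The first piece equals $N^{-1}\sum_i[\mu(\hat Y_i)-\theta]$ exactly, since $\mu(\hat Y)-\theta = (\mathbf{1}\{\hat Y=1\}-p)(\mu(1)-\mu(0))$. For the other two, $\hat\mu(k)-\mu(k) = m_k^{-1}\sum_{j\in\mathrm{cal}:\hat Y_j=k}(Y_j-\mu(k))$ with $m_k/m \xrightarrow{p} \Pr(\hat Y=k)$ and $\bar p\xrightarrow{p}p$, so replacing $\bar p$ by $p$ and $m_k/m$ by $\Pr(\hat Y=k)$ introduces only an $o_p(N^{-1/2})$ remainder (the errors are products of two $O_p(N^{-1/2})$ terms), collapsing those two pieces to $m^{-1}\sum_{j\in\mathrm{cal}}[Y_j-\mu(\hat Y_j)]$. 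Since $m/N \to 1/(1+\gamma_1) = \Pr(R=1)$, this equals $N^{-1}\sum_i (R_i/\Pr(R=1))\bigl[Y_i-\mu(\hat Y_i)\bigr] + o_p(N^{-1/2})$, completing the EIF expansion. A standard CLT then yields $\sqrt{N}(\hat\theta_{\mathrm{EIF}}-\theta) \xrightarrow{d} \mathcal{N}(0, \Var_P(\phi^\star))$, and efficiency follows immediately from Proposition~\ref{prop:eif-modelA}.

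For the closed-form variance, independence of $R$ from $(Y,\hat Y)$ together with $\E[Y-\mu(\hat Y)\mid \hat Y]=0$ kills the cross-term in $\Var(\phi^\star)$, giving $V_{\mathrm{EIF}} = \Var(\mu(\hat Y)) + (1+\gamma_1)\,\E[(Y-\mu(\hat Y))^2]$. In the binary model, $\mu(1) = \theta q_1/p$, $\mu(0) = \theta(1-q_1)/(1-p)$ with $p = (1-\theta)(1-q_0)+\theta q_1$. The identity $q_1-p = (1-\theta)(q_0+q_1-1)$ gives
\[
\Var(\mu(\hat Y)) \;=\; p(1-p)\bigl(\mu(1)-\mu(0)\bigr)^2 \;=\; \frac{\theta^2(1-\theta)^2(q_0+q_1-1)^2}{p(1-p)},
\]
and substituting $p-\theta q_1 = (1-\theta)(1-q_0)$ and $1-p-\theta(1-q_1) = (1-\theta)q_0$ into $p\,\mu(1)(1-\mu(1)) + (1-p)\,\mu(0)(1-\mu(0))$ yields $\E[(Y-\mu(\hat Y))^2] = \theta(1-\theta)\bigl[\theta q_1(1-q_1) + (1-\theta)q_0(1-q_0)\bigr]/[p(1-p)]$. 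Plugging both into the variance decomposition produces \eqref{eq:eif-asymp}.

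The main analytic hurdle is the $o_p(N^{-1/2})$ step in the second paragraph --- justifying that swapping $\bar p$ and $m_k/m$ for their probability limits is benign. This is the standard one-step/Neyman-orthogonality argument, but it is particularly clean here because $\hat Y$ is binary, so $\hat\mu(k)$ is just a Bernoulli sample mean converging at the parametric $\sqrt{m_k}\asymp\sqrt{m}$ rate; all second-order remainders are products of two $O_p(N^{-1/2})$ factors and hence $O_p(N^{-1})$, and no cross-fitting or high-dimensional concentration is required. The rest is rational-function algebra in $(\theta,q_0,q_1)$.
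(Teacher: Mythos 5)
Your proposal is correct, and it reaches the same influence-function expansion and the same variance formula as the paper, but by a genuinely more self-contained route. The paper defines $\hat\theta_{\mathrm{EIF}}$ as the solution of the empirical EIF estimating equation and then invokes standard one-step/M-estimation theory (citing \citet{bickel1993efficient}) to assert $\sqrt{N}(\hat\theta_{\mathrm{EIF}}-\theta)=N^{-1/2}\sum_i\phi^\star_i+o_p(1)$, before computing $\Var(\phi^\star)$ exactly as you do (orthogonal decomposition into $\mu(\hat Y)-\theta$ and the weighted residual, then the Bayes-rule algebra for $\mu(1),\mu(0)$). You instead observe that in the binary saturated case the calibration residuals sum to zero within each $\hat Y$ level, so the augmentation term in \eqref{eq:eif-est} vanishes identically and $\hat\theta_{\mathrm{EIF}}=\bar p\,\hat\mu(1)+(1-\bar p)\,\hat\mu(0)$; you then verify the EIF expansion by hand, with the second-order remainders explicitly bounded as products of two $O_p(N^{-1/2})$ quantities. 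This buys an elementary, fully explicit proof of asymptotic linearity (no appeal to general one-step theory), plus the nice structural fact --- not stated in the paper --- that the EIF estimator coincides with the pure impute-then-average estimator here; the paper's route is shorter and generalizes more directly beyond the binary/saturated case, where the cancellation you exploit no longer holds. Two cosmetic caveats you may want to flag: $\hat\mu(1)$ and $\hat\mu(0)$ are only defined on the event that both $\hat Y$ levels appear in the calibration set (which holds with probability tending to one), and the final CLT for $N^{-1/2}\sum_i\phi^\star(R_i,Y_i,\hat Y_i)$ uses the i.i.d.\ Bernoulli labeling mechanism the paper assumes (with deterministic $m,n$ one would instead combine the two independent sums directly); neither affects the conclusion.
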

\paragraph{\textbf{PPI is not generally efficient.}}
General efficiency theory implies that the EIF estimator $\hat\theta_{\mathrm{EIF}}$ in
\eqref{eq:eif-est} is asymptotically efficient under the general i.i.d.\ model for
$(Y,\widehat Y,R) \sim P$. Consequently, its asymptotic variance in
\eqref{eq:eif-asymp} is the smallest among all regular, asymptotically linear
estimators in this model. Moreover, with some algebra, one can verify that all
estimators introduced in Section~\ref{sec:method} are regular and asymptotically
linear. In particular, PPI corresponds
to the special choice $\mu(\hat Y)= \hat Y$  in \eqref{eq:eif-modelA}. This yields an unbiased estimator, but it is not
generally efficient unless $\E[Y\mid \hat Y]=\hat Y$, in which case $\hat Y$ is a perfect prediction for $Y$.

\paragraph{\textbf{Optimally tuned \texttt{PPI++} matches the EIF in the binary case.}}
Unlike PPI, \texttt{PPI++} searches over a one-parameter family of unbiased estimators and chooses
$\lambda$ to minimize asymptotic variance. In general, \texttt{PPI++} does not coincide with the EIF estimator,
because $\E[Y\mid \hat Y]$ need not equal $\lambda \hat Y$ for a single scalar $\lambda$. However, when
$Y,\hat Y\in\{0,1\}$, the regression function $\mu(\hat Y)=\E[Y\mid \hat Y]$ is \emph{always affine} in
$\hat Y$, so the \texttt{PPI++} family is rich enough to match the EIF estimator. Indeed, letting $\gamma_1=n/m$, the
variance-minimizing choice from Proposition~\ref{prop:PPIplus-bias-var} can be written as
\[
\lambda^\star
=
\frac{\gamma_1\,\Cov(Y,\hat Y)}{(1+\gamma_1)\Var(\hat Y)}
=
\frac{\gamma_1\,\theta(1-\theta)\,(q_0+q_1-1)}{(1+\gamma_1)\,p(1-p)},
\qquad
p=(1-\theta)(1-q_0)+\theta q_1.
\]
On the other hand, by Bayes' rule,
\begin{equation}
\mu(1)=\Pr(Y=1\mid \hat Y=1)=\frac{\theta q_1}{p},
\qquad
\mu(0)=\Pr(Y=1\mid \hat Y=0)=\frac{\theta(1-q_1)}{1-p}.
\end{equation}
Therefore,
\begin{align*}
\mu(1)-\mu(0)
&=
\frac{\theta q_1}{p}-\frac{\theta(1-q_1)}{1-p}
=\theta\,\frac{q_1-p}{p(1-p)}\\
&=
\theta\,\frac{q_1-\bigl[(1-\theta)(1-q_0)+\theta q_1\bigr]}{p(1-p)}
=
\frac{\theta(1-\theta)(q_0+q_1-1)}{p(1-p)}.
\end{align*}
Thus, $\lambda^\star=\frac{\gamma_1}{1+\gamma_1}\bigl\{\mu(1)-\mu(0)\bigr\}$, which is exactly the slope for the affine form of $\mu(\hat Y)$, showing that optimally tuned \texttt{PPI++} is
asymptotically equivalent to the EIF estimator in the binary setting.

\paragraph{\textbf{MLE is equivalent to EIF}}
Under the correctly specified model $(Y, \widehat Y, R) \sim P$, classical likelihood
theory implies that the MLE is asymptotically efficient for $\theta$
(Proposition~\ref{prop:mle-asymp}) and therefore attains the smallest possible
asymptotic variance~\citep{bickel1993efficient}. In the binary-outcome case, the
model is saturated by the three parameters $(\theta, q_0, q_1)$, and thus we
expect the asymptotic variance of the EIF estimator to coincide with that of the
MLE. The detailed algebra establishing the equivalence of the variance expressions
is deferred to the appendix.

\paragraph{RG is less efficient than PPI.} Finally, it is instructive to compare \eqref{eq:theta-RG} and \eqref{eq:theta-PPI}, which yields the following result in Proposition~\ref{prop:PPI-dominates-RG}.

\begin{proposition}
\label{prop:PPI-dominates-RG}
Fix $\theta\in(0,1)$ and $q_0,q_1\in(0,1)$ with $q_0+q_1-1>0$, and assume $n,m\to\infty$ with $n/m\to\gamma_1\in(0,\infty)$. Then
\[
V_{\mathrm{PPI}} \;\le\; V_{\mathrm{RG}},
\]
where
\[
\frac{\gamma_1}{1+\gamma_1}V_{\mathrm{PPI}} = p(1-p)
+
\gamma_1\Bigl[(1-\theta)(1-q_0)+\theta(1-q_1)-(\theta-p)^2\Bigr]
\]
and 
\[
\frac{\gamma_1}{1+\gamma_1}V_{\mathrm{RG}} =\frac{1}{(q_0+q_1-1)^2}
\left[
p(1-p)
\;+\;
\gamma_1\left\{
(1-\theta)\,q_0(1-q_0)
+
\theta\,q_1(1-q_1)
\right\}
\right]
\]
for all such parameters, with equality if and only if $q_0=q_1=1$ (i.e. $\Pr(\hat{Y} = Y)=1$).
\end{proposition}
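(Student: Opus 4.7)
The plan is to exhibit $V_{\mathrm{RG}}-V_{\mathrm{PPI}}$ as a sum of two manifestly non-negative pieces. Writing $c := q_0+q_1-1$, the assumption gives $c>0$ and the Bernoulli constraints force $c\le 1$. Multiplying the claimed inequality through by the positive factor $\tfrac{\gamma_1}{1+\gamma_1}\,c^2$ and rearranging, it suffices to show
\[
p(1-p)\,(1-c^2) \;+\; \gamma_1\bigl(A - c^2 B\bigr) \;\ge\; 0,
\]
where $A := (1-\theta)q_0(1-q_0)+\theta q_1(1-q_1)$ and $B := (1-\theta)(1-q_0)+\theta(1-q_1)-(\theta-p)^2$. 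Since $p\in[0,1]$ and $c^2\le 1$, the first summand is already non-negative, so the whole argument reduces to showing $A-c^2 B\ge 0$.

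For the second summand, I would introduce the shorthand $a := \theta(1-q_1)$ and $b := (1-\theta)(1-q_0)$. These satisfy $\theta - p = a - b$, and a short expansion produces the key identity
\[
A - c^2 B \;=\; a\,(q_1 - c^2) \;+\; b\,(q_0 - c^2) \;+\; c^2\,(a-b)^2,
\]
which I would verify by writing $A = q_0 b + q_1 a$ and $B = a + b - (a-b)^2$. The last term is a square, hence non-negative. For the first two, the key observation is that $c \in [0,1]$ implies $c^2 \le c$, whence $q_1 - c^2 \ge q_1 - c = 1 - q_0 \ge 0$ and symmetrically $q_0 - c^2 \ge 1 - q_1 \ge 0$; combined with $a,b \ge 0$, both weighted terms are non-negative. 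This completes the inequality $V_{\mathrm{PPI}} \le V_{\mathrm{RG}}$.

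For the equality characterization, I would trace when each piece vanishes. With $\theta,q_0,q_1 \in (0,1)$ one checks $p \in (0,1)$ is strictly interior (the alternative $p\in\{0,1\}$ would force $c=0$, violating identifiability), so $p(1-p) > 0$, and the first piece $p(1-p)(1-c^2)$ vanishes only at $c = 1$, i.e.\ at the boundary $q_0 = q_1 = 1$. At that point $a = b = 0$ and each of the three summands in the identity for $A - c^2 B$ also vanishes, so equality holds exactly for a perfect judge.

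The main obstacle I expect is spotting the decomposition of $A - c^2 B$ into the three non-negative pieces above: the $(a,b)$ substitution is what cleanly ties the $-(\theta-p)^2$ term inside $B$ to the square $c^2(a-b)^2$, and it is also what makes the bound $c^2 \le c$ land on the coefficients $q_i - c^2$ in a visibly sign-friendly way. Everything after that, including the identities $p = \theta q_1 + b$ and $1-p = a + (1-\theta)q_0$ used to verify $\theta - p = a-b$, is routine bookkeeping.
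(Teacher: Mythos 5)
Your proof is correct, and it follows the same top-level route as the paper: after clearing the factor $\tfrac{\gamma_1}{1+\gamma_1}$ and multiplying by $c^2=(q_0+q_1-1)^2$, both arguments split the difference into the trivially non-negative piece $p(1-p)(1-c^2)$ plus $\gamma_1$ times the crux term, which in the paper's notation is $V_3-\kappa^2 V_2$ and in yours is $A-c^2B$. Where you genuinely diverge is in how that crux term is handled. The paper first proves, via the law of total variance, the identity $V_3 = p(1-p)-\theta(1-\theta)\kappa^2$, and then asserts that $V_3-\kappa^2V_2=(1-\kappa)\,Q(\theta,q_0,q_1)$ with $Q\ge 0$, a factorization it says was ``verified symbolically'' without displaying $Q$. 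You instead give a fully explicit decomposition: with $a=\theta(1-q_1)$, $b=(1-\theta)(1-q_0)$ (so that $\theta-p=a-b$, $A=q_1a+q_0b$, $B=a+b-(a-b)^2$), you obtain
\[
A-c^2B \;=\; a\,(q_1-c^2)+b\,(q_0-c^2)+c^2(a-b)^2,
\]
and the elementary bound $c^2\le c$ gives $q_1-c^2\ge 1-q_0\ge 0$ and $q_0-c^2\ge 1-q_1\ge 0$, so all three summands are visibly non-negative. (I checked the identity; it is exact.) This buys a completely self-contained, hand-verifiable proof of precisely the step the paper leaves to symbolic computation, at the cost of not exposing the statistical interpretation the paper's lemma provides ($V_3=\E[\Var(\hat Y\mid Y)]$ and the between-class variance $\theta(1-\theta)\kappa^2$). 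Your equality analysis also matches the paper's: with $\theta,q_0,q_1$ interior, $p(1-p)>0$, so the first piece vanishes only at $c=1$, i.e.\ $q_0=q_1=1$, where the second piece vanishes as well.
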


\paragraph{Asymptotic Efficiency Summary.}
The relationships derived above imply a clear ordering of performance. Under the general i.i.d.\ model for $(Y,\hat Y)$, the EIF estimator $\hat\theta_{\mathrm{EIF}}$ attains the semiparametric efficiency bound. In the binary case, optimally tuned \texttt{PPI++} is asymptotically equivalent to $\hat\theta_{\mathrm{EIF}}$, and consequently to the MLE. Formally, for $N=n+m \to \infty$ and $\frac{n}{m}\to\gamma_1$:
\[
\hat\theta_{\mathrm{EIF}} 
\;=_{\mathrm{asymp}}\; 
\hat\theta_{\mathrm{MLE}} 
\;=_{\mathrm{asymp}}\; 
\hat\theta_{\mathrm{\texttt{PPI++}}}(\lambda^\star) 
\;\prec_{\mathrm{asymp}}\; 
\hat\theta_{\mathrm{PPI}} 
\;\prec_{\mathrm{asymp}}\; 
\hat\theta_{\mathrm{RG}},
\]
where $=_{\mathrm{asymp}}$ denotes equivalence up to $o_p(n^{-1/2})$ and $\prec_{\mathrm{asymp}}$ denotes smaller asymptotic variance.

\section{Beyond binary: Nonparametric Estimation of $\mathbb{E}[Y\mid \hat Y]$}
\label{subsec:eif-beyond}

In the binary case, $\mu(\hat Y) = \mathbb{E}[Y\mid \hat Y]$ is necessarily affine, allowing optimally tuned \texttt{PPI++} to achieve the efficiency bound. Moreover, the nonparametric distribution on $(Y, \hat{Y})$ is equivalent to the three-parameter distribution used for the MLE in the binary case; this ``saturated'' property allows the MLE to attain the efficiency bound \emph{without} making any restrictive parametric assumptions.

For richer surrogate outputs $\widehat Y$ (e.g., continuous scores and ordinal ratings), the functional form of the EIF, and therefore of
$\hat\theta_{\mathrm{EIF}}$ in \eqref{eq:eif-est}, remains unchanged. However, the
regression function $\mu(\cdot)$ may be nonlinear, and efficiency depends on how
close the estimator $\widehat{\mu}(\cdot)$ is to the true conditional mean function.

When both $Y$ and $\hat Y$ are one-dimensional, estimating $\mu(\hat Y)$ reduces to a univariate regression problem, which can be addressed using models ranging from simple linear regression to flexible nonparametric smoothers. Given a consistent estimator $\hat{\mu}(\cdot)$ and i.i.d. model $(Y,\hat Y, R)$, we can then construct the plug-in EIF estimator:
\begin{equation}
\label{eq:continupus-theta}
    \hat\theta_{\mathrm{EIF}} = \frac{1}{N} \sum_{i=1}^{N} \left( \hat{\mu}(\hat Y_i) + \frac{1}{\hat{P}(R=1)} R_i \{Y_i - \hat{\mu}(\hat Y_i)\} \right).
\end{equation}
If we \emph{design} the labeling process $R$, e.g., we  label a data point with probability $P(R=1)=\frac{\gamma_1}{1+\gamma_1}$, then as long as 
$\hat{\mu}(\cdot)$ is a consistent estimator of $\mu(\cdot)$, \eqref{eq:continupus-theta} will be asymptotically efficient. 

We finally note that, unlike the binary (or the categorical with finite categories) case, we do not have a direct analogy with the MLE or RG estimator in the continuous case without assuming additional parametric relationship of $Y$ and $\hat{Y}$. 

\section{Simulation Study}
\label{sec:simulation}

\subsection{Binary $Y$}

We compare the performance of following estimators in simulation:
  the naive estimator in~\eqref{eq:theta-naive}, the Rogan--Gladen estimator in~\eqref{eq:theta-RG}, the PPI estimator in~\eqref{eq:theta-PPI}, the \texttt{PPI++} estimator in~\eqref{eq:theta-PPIplus},
  the MLE in \eqref{eq:mle-loglik-R}, and the \texttt{EIF} estimator in~\eqref{eq:eif-est}.

\subsection{Simulation Setup}
\label{subsec:sim-setup}

  \paragraph{Data-generating process.}
  Each replicate consists of $N = 2{,}000$ i.i.d.\ samples.
  The true binary label is generated as 
  $Y_i \stackrel{\text{i.i.d.}}{\sim} \mathrm{Bern}(\theta)$,
  where $\theta \in \{0.1, 0.2, \ldots, 0.9\}$ is the target prevalence.

The LLM-as-a-judge surrogate $\widehat Y_i$ is generated from a misclassification
model with constant sensitivity and specificity,
$\Pr(\widehat Y_i = 1 \mid Y_i = 1) = q_1$ and
$\Pr(\widehat Y_i = 0 \mid Y_i = 0) = q_0$, respectively.

  \paragraph{Parameter grid.}
  We vary the following parameters across a full factorial design:
  \begin{itemize}
      \item True prevalence: $\theta \in \{0.1, 0.2, \ldots, 0.9\}$.
      \item Judge accuracy: $q_0, q_1 \in \{0.6, 0.7, 0.8\}$.
      \item Labeling budget (calibration size): $\{1\%, 5\%, 10\%\}$ of $N$
            (i.e., $m \in \{20, 100, 200\}$ human labels in expectation).
  \end{itemize}
  The calibration set is obtained by \emph{random sampling} from the full dataset; thus the class counts
  $m_0$ and $m_1$ in the calibration set are random and depend on the underlying prevalence $\theta$.
  For each configuration, we perform $B = 1{,}000$ Monte Carlo replicates to compute bias as well as coverage and average width of 90\% confidence intervals. 

  \paragraph{Confidence interval construction.}
  We apply a logit transformation to all confidence intervals to improve finite-sample coverage when $\theta$ is near the boundary~\citep{brown2001interval,stone1996course}. Given point estimate $\hat\theta$ and variance estimate $\widehat V$, we form
  \[
  \mathrm{CI}_{1-\alpha} = \mathrm{expit}\!\left(
    \mathrm{logit}(\hat\theta) \pm z_{\alpha/2} \cdot \frac{\sqrt{\widehat V}}{\hat\theta(1 - \hat\theta)}
  \right),
  \]
  where the standard error on the logit scale is obtained via the delta method.  

\paragraph{Evaluation metrics.}
For each estimator $\hat\theta$, let $\hat\theta^{(b)}$ denote its value in replicate 
$b = 1,\dots,B$, and let $\mathrm{CI}^{(b)} = [L^{(b)}, U^{(b)}]$ denote its corresponding 
90\% confidence interval. We compute:
\begin{itemize}
\item \textbf{Bias:}
$
\mathrm{Bias}(\hat\theta)
= \frac{1}{B} \sum_{b=1}^B \bigl(\hat\theta^{(b)} - \theta\bigr);
$
\item \textbf{Coverage probability:}
$\mathrm{Cov}(\hat\theta)
= \frac{1}{B} \sum_{b=1}^B 
   \mathbf{1}\!\left\{\, L^{(b)} \le \theta \le U^{(b)} \,\right\};$
\item \textbf{Mean confidence interval width:}
$\mathrm{Width}(\hat\theta)
= \frac{1}{B} \sum_{b=1}^B \bigl(U^{(b)} - L^{(b)}\bigr).$
\end{itemize}

\subsection{Results under symmetric judge accuracy ($q_0 = q_1$)}
\label{subsec:sim-results-diag}

Figures~\ref{fig:bias-diag}--\ref{fig:ciwidth-diag} are organized with rows corresponding to accuracy levels
$q_0 = q_1 \in \{0.6, 0.7, 0.8\}$ and columns corresponding to labeling budgets.

\paragraph{Bias.}
Figure~\ref{fig:bias-diag} shows that all estimators except the naive one exhibit
smaller average bias than the naive sample-mean estimator. Recall that the bias of
the naive estimator is $(1 - q_0) + \theta (q_0 + q_1 - 2)$, which is negative for
large $\theta > 0.5$ and positive for small $\theta < 0.5$, consistent with our
simulation results. Among the remaining estimators, $\hat\theta_{\mathrm{RG}}$
exhibits comparatively larger finite-sample bias.

\begin{figure}[htbp!]
    \centering
\includegraphics[width=\textwidth]{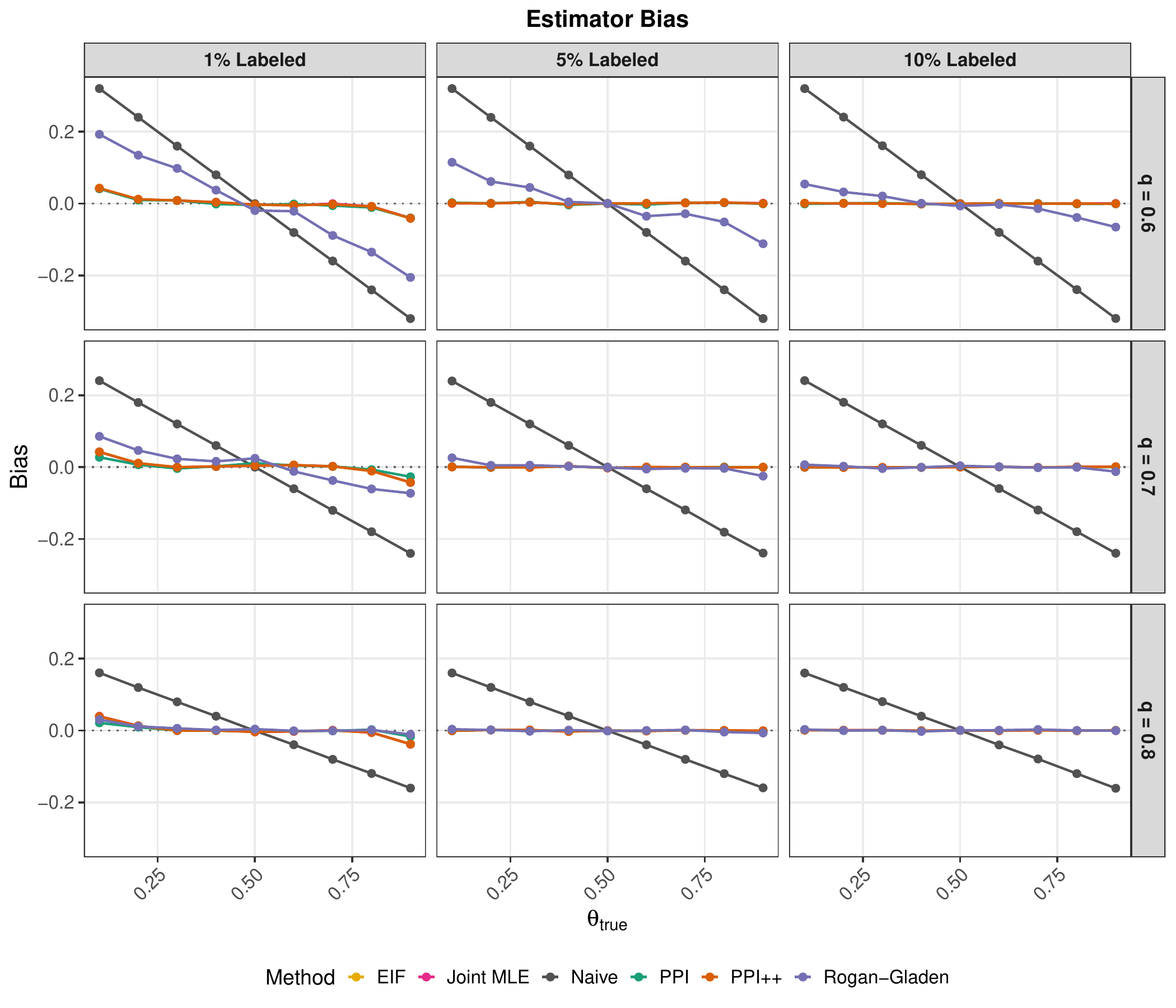}
    \caption{Estimator bias of $\hat\theta$. 
    All debiased estimators achieve near-zero bias; the naive estimator (red) exhibits
    large bias in many settings. When only 1\% of the data is labeled, $\hat\theta_{\mathrm{RG}}$ also exhibits considerable bias.}
    \label{fig:bias-diag}
\end{figure}

\paragraph{Coverage.}
Figure~\ref{fig:coverage-diag} reports empirical coverage rates.
The naive estimator exhibits severe undercoverage, consistent with its asymptotic invalidity (at $\theta=0.5$ the naive estimator is unbiased so the coverage is closer to nominal level). All other estimators maintain near-nominal coverage across settings. The RG estimator and its associated confidence intervals tend to overcover, particularly when the number of labeled observations is small (i.e., when the calibration set is limited).

\begin{figure}[htbp!]
    \centering
\includegraphics[width=0.95\textwidth]{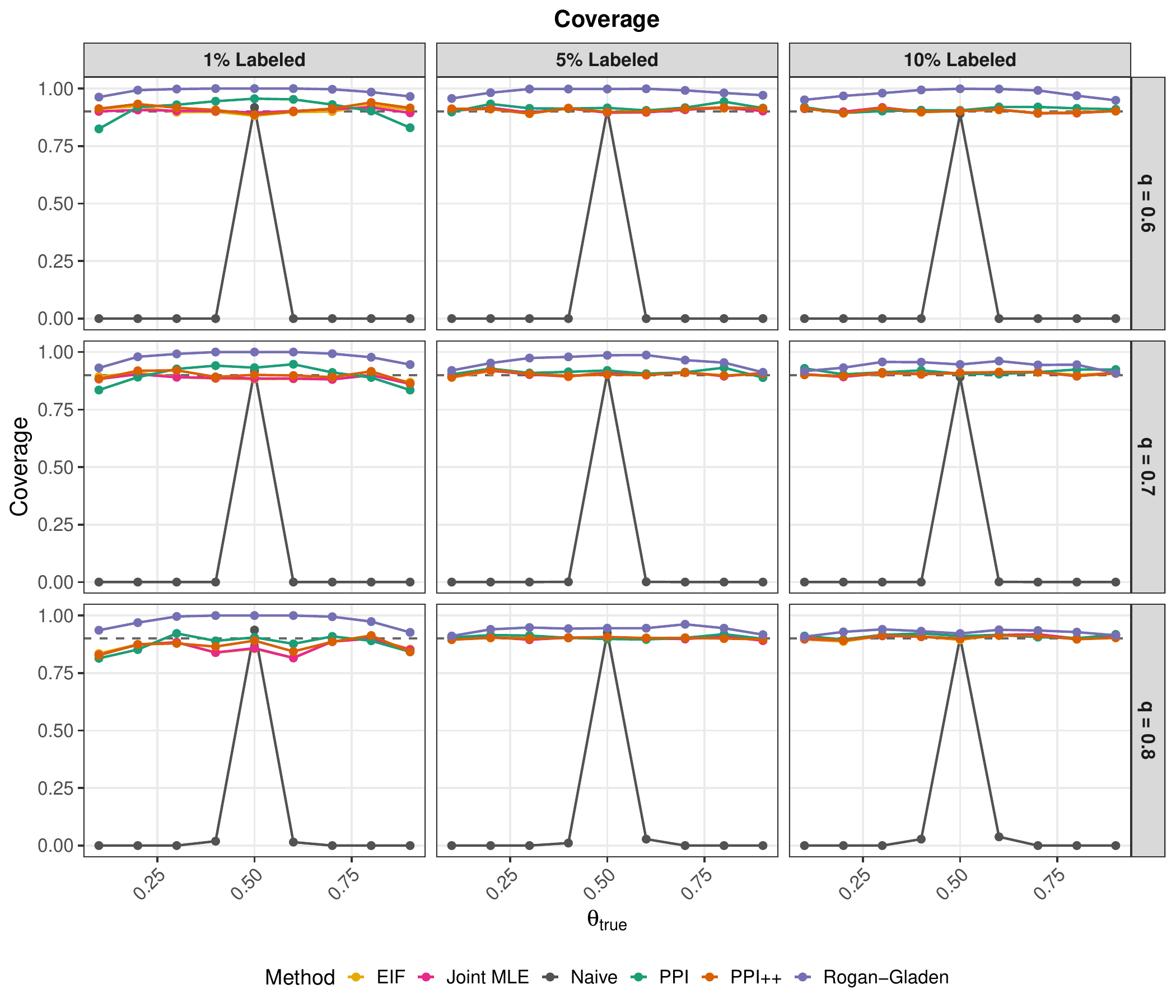}
\caption{Empirical coverage rates under the symmetric case $q_0 = q_1$.
The naive estimator exhibits severe undercoverage. RG, PPI, and \texttt{PPI++} generally achieve nominal coverage; however, RG tends to overcover when the labeled fraction $m/n$ is small and judge quality is low (i.e., smaller $q_0 + q_1$).}
    \label{fig:coverage-diag}
\end{figure}

  \paragraph{Confidence interval width.}
  Figure~\ref{fig:ciwidth-diag} compares interval widths among valid estimators (i.e., $\hat\theta_{\mathrm{naive}}$ excluded). As our theory predicted in Section~\ref{sec:eif}, \texttt{EIF} and \texttt{PPI++} produce nearly identical and consistently shortest intervals,
  typically 35--55\% narrower than standard \texttt{PPI}. 
  \texttt{MLE} intervals are slightly wider but overall comparable to \texttt{EIF} and \texttt{PPI++}. Rogan--Gladen intervals widen dramatically under low-accuracy judges due to
  the $1/(q_0+q_1-1)^2$ amplification factor in its asymptotic variance---at $q_0 = q_1 = 0.6$,
  RG intervals are roughly 10$\times$ wider than those of \texttt{EIF}/\texttt{PPI++}.
  \begin{figure}[t]
      \centering
      \includegraphics[width=0.95\textwidth]{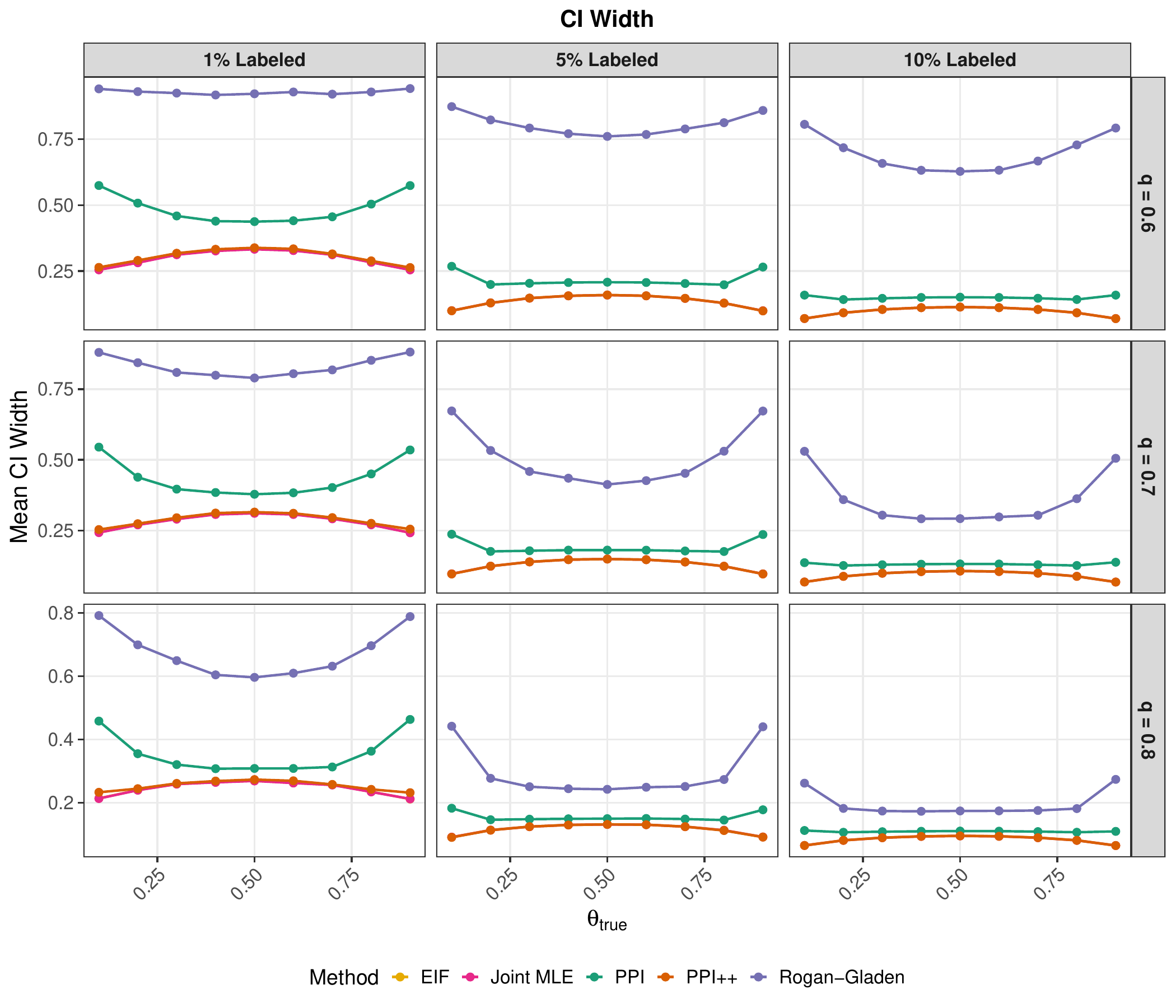}
      \caption{Mean confidence interval width for the symmetric case $q_0 = q_1$.
      \texttt{EIF} and \texttt{PPI++} produce nearly identical and shortest intervals,
      outperforming Rogan--Gladen by a factor of 3--15$\times$ depending on the labeling ratio.
      The advantage is most pronounced when $q_0 + q_1 - 1$ is small (i.e., when the LLM-judge is closer to random guessing).
      \texttt{MLE} produces slightly wider intervals but achieves coverage closer to nominal. Naive confidence intervals are excluded since they are too narrow and do not achieve the desired coverage.}
      \label{fig:ciwidth-diag}
  \end{figure}

\subsection{Continuous $Y$}
\label{subsec:sim-continuous-Y}
Here, we consider the simulation setup in \citet{ji2025predictionssurrogatesrevisitingsurrogate} where $\hat{Y}$ takes discrete values (e.g., LLM ratings on a 1--5 scale) while the true outcome $Y$ is continuous. This scenario is common when LLMs are used as judges, providing ordinal ratings, but the ultimate goal is to infer the continuous quality scale for the models $Y$.

\paragraph{Data Generating Process.}
Let $Z \sim \text{Unif}(\{1, 2, 3\})$ denote a latent mixture component, with conditional distribution $Y \mid Z \sim \cN(\mu_Z, \sigma^2)$ where $\bmu = (\mu_1, \mu_2, \mu_3)^\top$. The surrogate prediction is on the component $Z$, and it's easy to see that we need to carefully model $\E[Y|\hY]$. Even when the prediction $\hY$ perfectly identifies the latent class, the naive estimator in \eqref{eq:theta-naive} is biased for $\E[Y]$ because $\hY$ and $Y$ are on different scales. Specifically, the bias would be $2 - (\mu_1 + \mu_2 + \mu_3)/3$, which grows as the $\mu_k$ values diverge from their category indices.

\paragraph{Simulation Design.}
We fix $N = 2000$, $\mu_1 = 1$, $\mu_2 = 2$, $\sigma = 1$, and vary $\mu_3 \in \{3, 4, \ldots, 9\}$ to control bias magnitude. The labeled fraction varies over $\{5\%, 10\%, 20\%\}$. We report coverage of 90\% confidence intervals and RMSE across $B = 500$ replications.

Here, we compare the performance of the naive estimator in~\eqref{eq:theta-naive}, the
  prediction-powered estimator in~\eqref{eq:theta-PPI}, the power-tuned \texttt{PPI++} estimator in~\eqref{eq:theta-PPIplus},
 EIF estimator in~\eqref{eq:continupus-theta}. For the EIF estimator, we use the following three variants to approximate $\E[Y|\hY]$: (1) {EIF}: Learns $g(z) = \E[Y \mid \hY = z]$ separately for each $z \in \{1,2,3\}$; (2) {EIF-linear}: Fits $g(\hY) = a + b \cdot \hY$ via a linear regression (note this is asymptotically equivalent to \texttt{PPI++}); and (3) {EIF-GAM/spline}: Nonparametric calibration treating $\hY$ as smooth function. Since we are \emph{not} imposing additional distributional assumptions on $Y$, we cannot write down a parametric likelihood function in the continuous case, and therefore cannot easily extend the Rogan--Gladen and the MLE estimators to this setting.

\paragraph{Results.}
Figure~\ref{fig:discrete-coverage-width} shows how inferential performance depends on how $\E[Y\mid\hY]$ is estimated. When $\bmu=(1,2,3)$, the conditional mean is linear in $\hY$, so using $\E[Y\mid\hY]=\hY$ is correctly specified and all methods except the naive estimator achieve nominal coverage. Moreover, the confidence-interval width is indistinguishable across methods, as EIF, PPI, and \texttt{PPI++} are asymptotically equivalent. However, as $\mu_3$ increases, the conditional mean is no longer linear. While all debiased estimators retain correct coverage, there is a clear efficiency gradient: PPI is the least efficient, followed closely by \texttt{PPI++} and EIF with a linear model for $\E[Y\mid\hY]$ (these two are asymptotically equivalent under optimal tuning of \texttt{PPI++}). Finally, using a flexible model to estimate $\E[Y\mid\hY]$, such as a GAM~\citep{wood2012mgcv} or spline, or fitting a three-category model that learns $g(z)=\E[Y\mid\hY=z]$ separately for each $z\in\{1,2,3\}$, achieves the most efficient tier.
\begin{figure}[htbp]
    \centering
    \includegraphics[width=\textwidth]{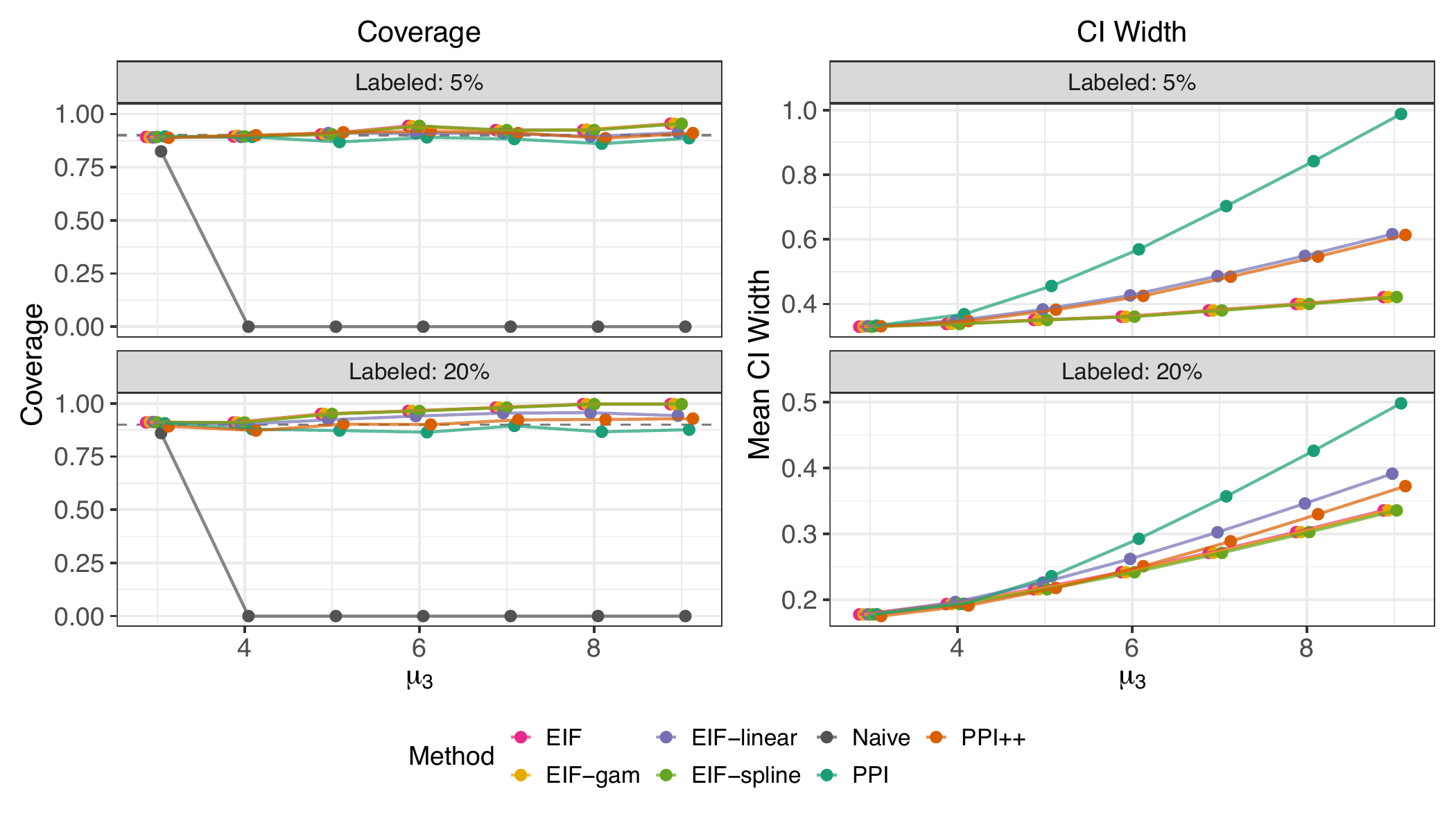}
\caption{Coverage (left) and CI width (right) for discrete prediction simulations. The dashed line indicates nominal 90\% coverage. Most methods achieve approximately valid coverage across varying bias magnitudes ($\mu_3$), with naive intervals severely undercovering as bias increases. EIF with spline and GAM calibration achieves the narrowest intervals, reflecting its ability to correct discrete predictions. As predicted, linear correction and \texttt{PPI++} tread closely.}
  \label{fig:discrete-coverage-width}
\end{figure}

\section{Real Data Application}
  \label{sec:real-data}

  \subsection{Data and Experimental Setup}

  We evaluate the estimators on real human preference data from the \texttt{arena-human-preference-140k} dataset, which is collected on Chatbot Arena~\citep{chiang2024chatbotarenaopenplatform}, an online platform where users compare responses from two AI models and select a winner.
 We focus on three high-frequency model pairs, all with Claude Opus 4 as model A: (1) Claude Opus 4 vs.\ Gemini 2.5 Flash ($n = 493$); (2) Claude Opus 4 vs.\ Gemini 2.5 Pro ($n = 414$); (3) Claude Opus 4 vs.\ Qwen3-235B ($n = 494$).

  For each battle, the ground truth $Y_i = 1$ if the human selected model A (Claude), and $Y_i = 0$ otherwise. We deploy two LLM judges, GPT-4o-mini and GPT-5.2, to produce surrogate labels $\widehat{Y}_i \in \{0, 1\}$ based on the same prompt-response pairs shown to human raters.

  To mimic a real-world setting where we have only access to part of the preference data, we use a 10\% random sample of the data as a labeled calibration set to estimate either $\mathbb{E}[Y \mid \hat{Y}]$ or the misclassification probabilities $(q_0, q_1)$, and the remaining 90\% as an unlabeled test set. All confidence intervals are constructed at the 90\% nominal level.
 \subsection{Results}

\paragraph{Judge quality.}
Table~\ref{tab:judge-quality} reports the estimated specificity ($\hat q_0$) and sensitivity ($\hat q_1$) for each judge--dataset combination, computed from a randomly sampled $10\%$ calibration set.
The condition $q_0 + q_1 > 1$ indicates discriminative power beyond random guessing.

Judge quality varies substantially across settings.
GPT-4o-mini performs best when evaluating Claude~4 vs.\ Gemini~2.5 Flash, but worse on Claude~4 vs.\ Gemini~2.5 Pro.
GPT-5.2 achieves the highest overall discrimination on Claude~4 vs.\ Gemini~2.5 Pro.
Both judges perform poorly on Claude~4 vs.\ Qwen3-235B, exhibiting low specificity despite moderate sensitivity.
These heterogeneous error rates further underscore the need for bias correction: naive estimation inherits the systematic errors of the judge.

  \begin{table}[t]
      \centering
      \caption{Estimated judge quality metrics from calibration data ($\hat{q}_0$: specificity; $\hat{q}_1$: sensitivity). Values of $q_0 + q_1$ near 1 and 2 indicate near-random and near-perfect performance, respectively.}
      \label{tab:judge-quality}
      \begin{tabular}{llcccc}
          \toprule
          Model Pair & Judge & $\hat{q}_0$ & $\hat{q}_1$ & $\hat{q}_0 + \hat{q}_1$ \\
          \midrule
          Claude 4 vs. Gemini 2.5 Flash & GPT-4o-mini & 0.74 & 0.69 & 1.43 \\
           Claude 4 vs. Gemini 2.5 Flash & GPT-5.2     & 0.47 & 0.67 & 1.14 \\
          \midrule
           Claude 4 vs. Gemini 2.5 Pro   & GPT-4o-mini & 0.50 & 0.81 & 1.31 \\
          Claude 4 vs.  Gemini 2.5 Pro   & GPT-5.2     & 0.64 & 0.82 & 1.46 \\
          \midrule
           Claude 4 vs. Qwen3-235B       & GPT-4o-mini & 0.44 & 0.70 & 1.14 \\
           Claude 4 vs. Qwen3-235B       & GPT-5.2     & 0.50 & 0.64 & 1.14 \\
          \bottomrule
      \end{tabular}
  \end{table}

\paragraph{Illustrative examples on one split.}
Figure~\ref{fig:ci-examples} shows point estimates and $90\%$ confidence intervals for a single random calibration/test split across all three model pairs.
The horizontal black line indicates the true win rate, computed from all human labels. We observe that: 
(i) the naive estimator is consistently biased, overestimating the true win rate when using GPT-4o-mini as a judge for Claude~4 Opus;
(ii) the Rogan--Gladen estimator produces extremely wide intervals that are largely uninformative in practice; and
(iii) \texttt{PPI++}, \texttt{EIF}, and the MLE yield similar point estimates with moderate interval widths that cover the true value.

  \begin{figure}[htbp!]
      \centering
      \begin{subfigure}[b]{0.6\textwidth}
          \centering
          \includegraphics[width=\textwidth]{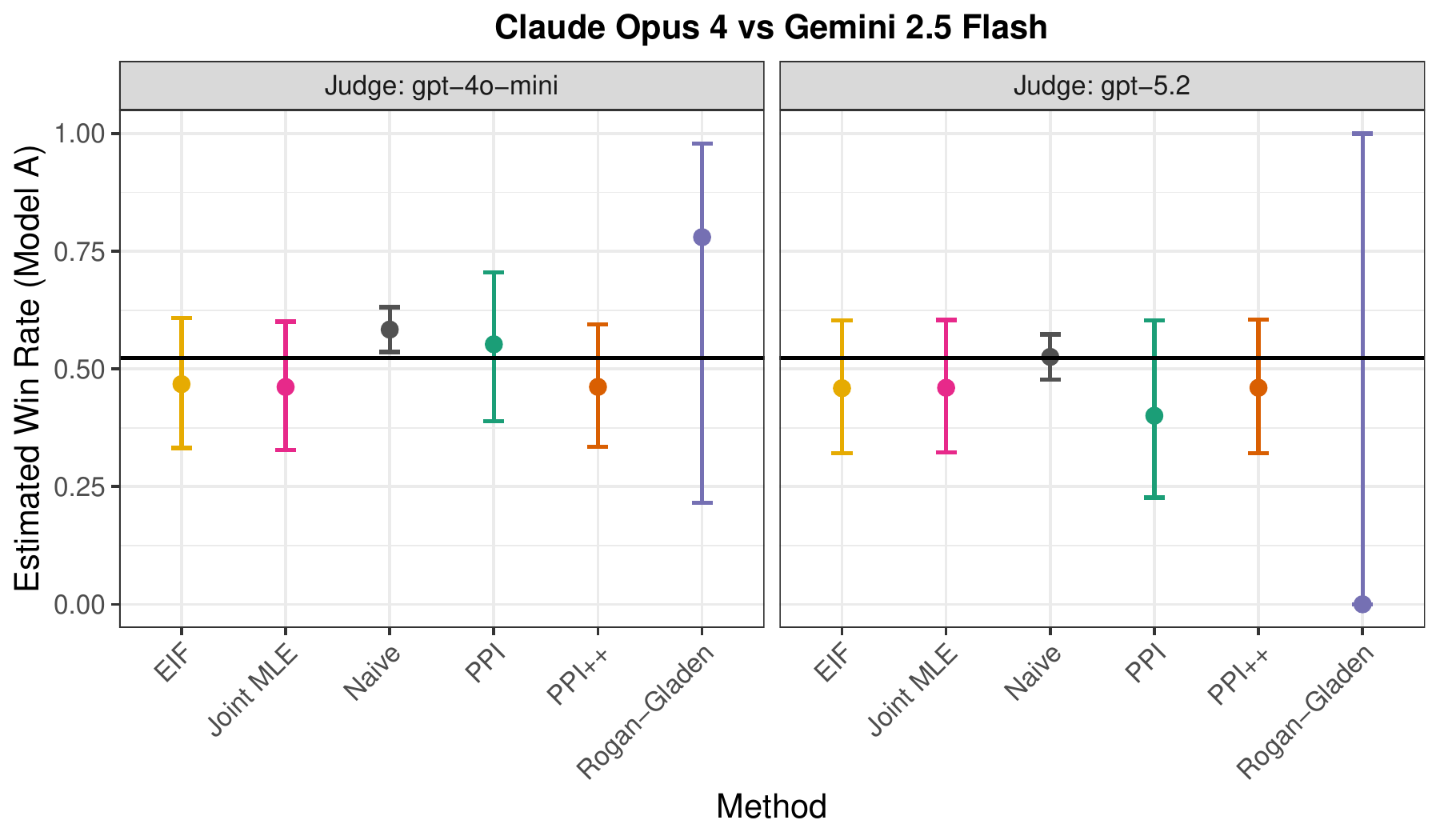}
          \caption{Claude Opus 4 vs.\ Gemini 2.5 Flash (true win rate: 0.523)}
      \end{subfigure}

      \vspace{0.5em}

      \begin{subfigure}[b]{0.6\textwidth}
          \centering
          \includegraphics[width=\textwidth]{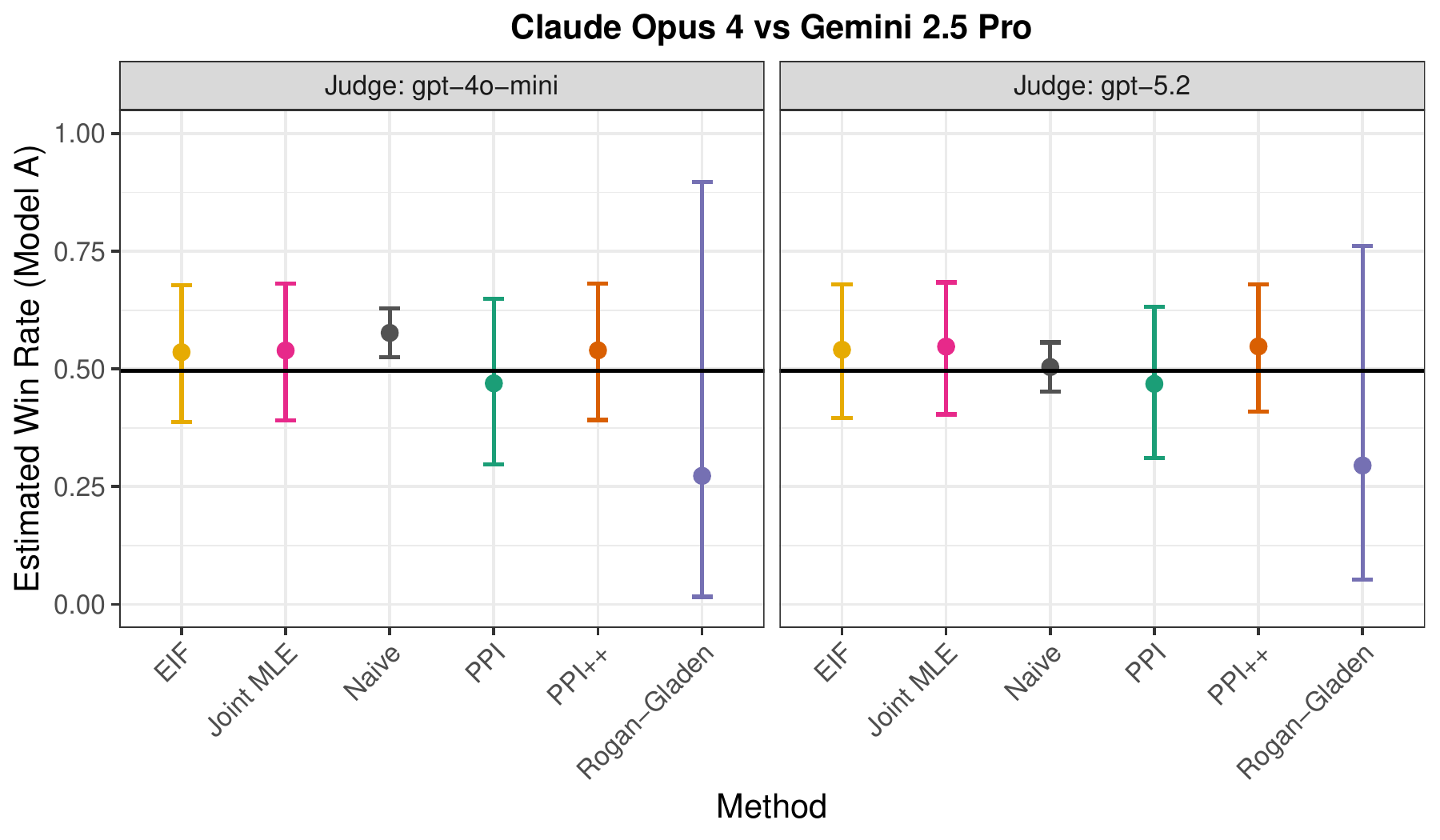}
          \caption{Claude Opus 4 vs.\ Gemini 2.5 Pro (true win rate: 0.496)}
      \end{subfigure}

      \vspace{0.5em}

      \begin{subfigure}[b]{0.6\textwidth}
          \centering
          \includegraphics[width=\textwidth]{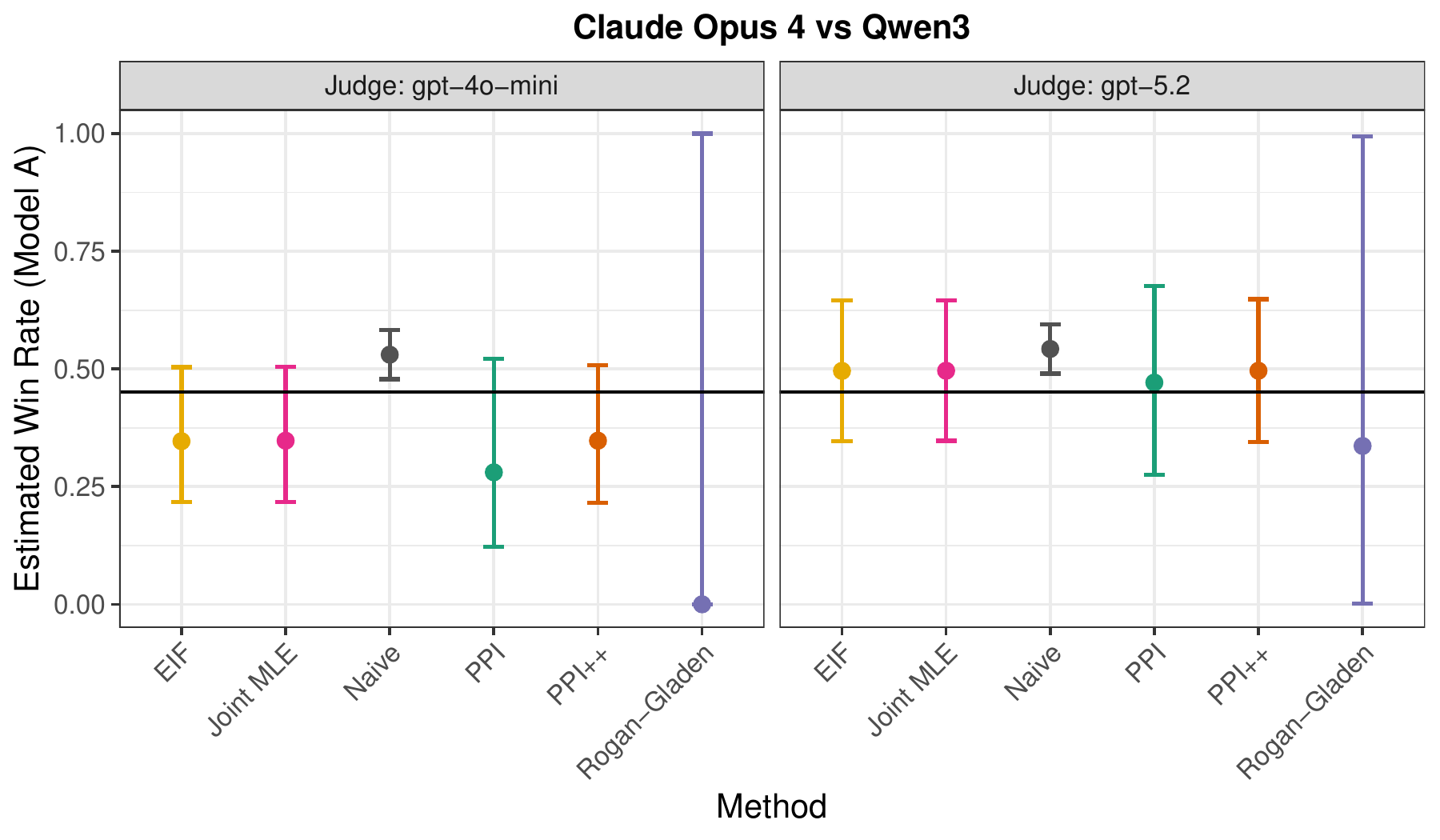}
          \caption{Claude Opus 4 vs.\ Qwen3-235B (true win rate: 0.451)}
      \end{subfigure}

      \caption{Point estimates and 90\% confidence intervals for win rate estimation across three model pairs, using two LLM judges (GPT-4o-mini and GPT-5.2). Horizontal black line shows the true win rate. Different estimators and CIs are displayed in different colors.}
      \label{fig:ci-examples}
  \end{figure}

\paragraph{Coverage and efficiency.}
To assess coverage properties, Figure~\ref{fig:real-data-coverage} plots empirical coverage against mean confidence interval width for each estimator across 1{,}000 random calibration/test splits. 
The naive estimator is unreliable, exhibiting coverage that varies dramatically with the magnitude and direction of judge bias: it attains near-perfect coverage when systematic errors happen to align with the true prevalence (e.g., GPT-5.2 on Gemini~2.5 Pro), but collapses to $0\%$ coverage when bias is substantial, as observed for GPT-4o-mini on Gemini~2.5 Pro and for both judges on Qwen3-235B.

In contrast, \texttt{PPI++}, \texttt{EIF}, and MLE  achieve near-nominal coverage across all judge--dataset combinations, with mean interval widths of $0.27$--$0.30$.
PPI slightly overcovers with wider intervals ($0.35$--$0.44$), while Rogan--Gladen is highly conservative, attaining $100\%$ coverage at the cost of very wide intervals ($0.76$--$0.96$).
Among methods with valid coverage, \texttt{PPI++}, EIF, and the MLE offer the best coverage--width tradeoff, as predicted by our theory, achieving nominal coverage with intervals roughly three times narrower than Rogan--Gladen.

  \begin{figure}[htbp!]
      \centering
      \includegraphics[width=\textwidth]{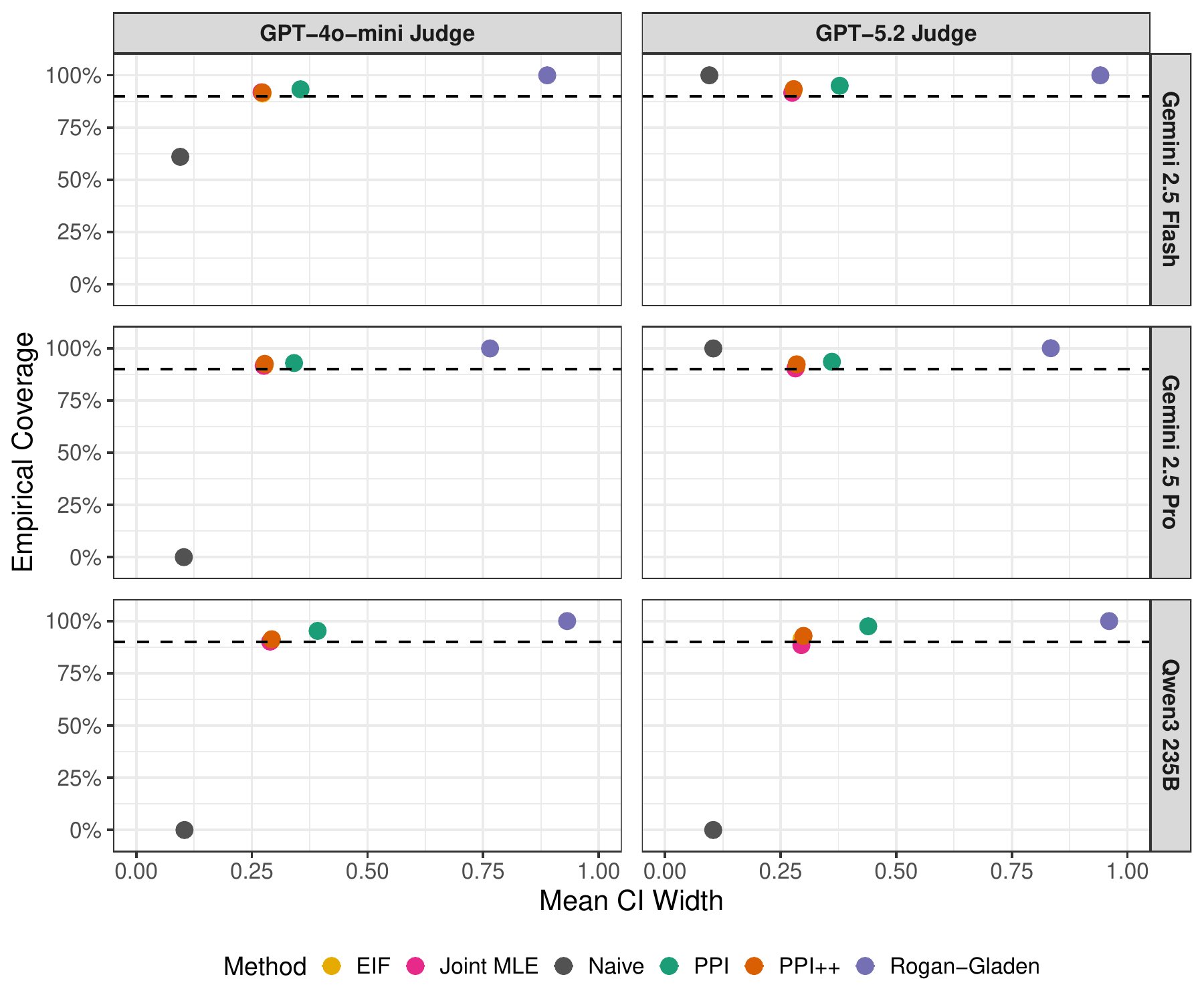}
      \caption{Empirical coverage versus mean CI width across 1{,}000 random 90/10 test/calibration splits. Dashed line indicates 90\% nominal coverage. Error bars show 90\% confidence intervals. Naive estimation (gray) achieves 0\% coverage when judge bias is substantial, while \texttt{PPI++}, EIF, and MLE maintain nominal coverage with narrow intervals.}
      \label{fig:real-data-coverage}
  \end{figure}
  
\section{Discussion}
\label{sec:discussion}

We presented a unified lens for understanding calibration in LLM-as-a-judge evaluations via efficient influence functions.
We compared, theoretically and through simulations, two complementary strategies: direct measurement-error correction via misclassification models and surrogate-based methods such as PPI, which treat LLM judgments as surrogate outcomes.
When the target is a mean outcome---as is common in LLM-as-a-judge settings---both RG and PPI variants provide simple, reliable solutions, with \texttt{PPI++}/EIF-based approaches preferred for their variance advantages.
In particular, for binary outcomes, optimally tuned \texttt{PPI++} is equivalent to the EIF-based strategy.


Several extensions follow naturally.
A key direction is instance-dependent misclassification, where LLM error rates vary with input features or example difficulty.
Extending our framework to this setting parallels the semiparametric efficient inference literature: with
$m(X,\hat Y) = \E[Y \mid X,\hat Y]$, our final estimator involves an influence function of the form
\begin{equation}
\label{eq:eif-instance-dependent}
\phi^\star(O)
= m(X,\hat Y) - \theta
+ \frac{R}{\Pr(R=1 \mid X,\hat Y)}\{Y - m(X,\hat Y)\},
\end{equation}

Another direction is developing multi-judge ensemble methods that combine heterogeneous LLM evaluators to improve robustness.
This would require a careful treatment of multiple judges with distinct error rates $(q_0, q_1)$ in \eqref{eq:q-param}, as well as principled ways to aggregate their surrogate labels $\hat Y$ (e.g., via optimal weighting or learned combination rules).

Finally, robustness to distribution shift, including covariate shift (e.g., systematic differences in response styles between the calibration and test sets) and label shift (e.g., changes in the prevalence of ground-truth labels $Y$ between calibration and deployment, such as when rolling out models to new cultural or geographic contexts), warrants deeper methodological and empirical study, and we leave these theoretical and computational developments to future work.

As LLM-as-a-judge evaluations become increasingly central to model development and scientific assessment, we hope our results provide a practical foundation for principled calibration, valid uncertainty quantification, and efficient use of limited human labels.

\clearpage

\bibliographystyle{plainnat}
\bibliography{references}

@article{egami2023using,
  title={Using imperfect surrogates for downstream inference: Design-based supervised learning for social science applications of large language models},
  author={Egami, Naoki and Hinck, Musashi and Stewart, Brandon and Wei, Hanying},
  journal={Advances in Neural Information Processing Systems},
  volume={36},
  pages={68589--68601},
  year={2023}
}

@article{tian2025overconfidence,
  title={Overconfidence in LLM-as-a-judge: Diagnosis and confidence-driven solution},
  author={Tian, Zailong and Han, Zhuoheng and Chen, Yanzhe and Xu, Haozhe and Yang, Xi and Xuan, Richeng and Wang, Houfeng and Liao, Lizi},
  journal={arXiv preprint arXiv:2508.06225},
  year={2025}
}

@inproceedings{liu2024calibrating,
  title={Calibrating llm-based evaluator},
  author={Liu, Yuxuan and Yang, Tianchi and Huang, Shaohan and Zhang, Zihan and Huang, Haizhen and Wei, Furu and Deng, Weiwei and Sun, Feng and Zhang, Qi},
  booktitle={Proceedings of the 2024 joint international conference on computational linguistics, language resources and evaluation (lrec-coling 2024)},
  pages={2638--2656},
  year={2024}
}

@article{shi2024judging,
  title={Judging the judges: A systematic study of position bias in llm-as-a-judge},
  author={Shi, Lin and Ma, Chiyu and Liang, Wenhua and Diao, Xingjian and Ma, Weicheng and Vosoughi, Soroush},
  journal={arXiv preprint arXiv:2406.07791},
  year={2024}
}

@article{sahoo2025quantitative,
  title={Quantitative LLM Judges},
  author={Sahoo, Aishwarya and Karnuthala, Jeevana Kruthi and Budhwani, Tushar Parmanand and Agarwal, Pranchal and Vaidyanathan, Sankaran and Siu, Alexa and Dernoncourt, Franck and Healey, Jennifer and Lipka, Nedim and Rossi, Ryan and others},
  journal={arXiv preprint arXiv:2506.02945},
  year={2025}
}

@article{schroeder2024can,
  title={Can you trust {LLM} judgments? reliability of LLM-as-a-judge},
  author={Schroeder, Kayla and Wood-Doughty, Zach},
  journal={arXiv preprint arXiv:2412.12509},
  year={2024}
}

@inproceedings{li2025calibraeval,
  title={Calibraeval: Calibrating prediction distribution to mitigate selection bias in LLMs-as-judges},
  author={Li, Haitao and Chen, Junjie and Ai, Qingyao and Chu, Zhumin and Zhou, Yujia and Dong, Qian and Liu, Yiqun},
  booktitle={Proceedings of the 63rd Annual Meeting of the Association for Computational Linguistics (Volume 1: Long Papers)},
  pages={16537--16552},
  year={2025}
}

@article{xu2025unified,
  title={{A Unified Framework for Semiparametrically Efficient Semi-Supervised Learning}},
  author={Xu, Zichun and Witten, Daniela and Shojaie, Ali},
  journal={arXiv preprint arXiv:2502.17741},
  year={2025}
}

@article{wood2012mgcv,
  title={mgcv: Mixed GAM Computation Vehicle with GCV/AIC/REML smoothness estimation},
  author={Wood, Simon},
  year={2012}
}

@book{little2002statistical,
  added-at = {2012-09-09T11:27:10.000+0200},
  author = {Little, R.J.A. and Rubin, D.B.},
  biburl = {https://www.bibsonomy.org/bibtex/2b2fb20df470898e82317fb855c088703/peter.ralph},
  interhash = {3afe77a956d4d25ef971f22ee3172776},
  intrahash = {b2fb20df470898e82317fb855c088703},
  isbn = {9780471183860},
  keywords = {missing_data statistics},
  lccn = {2002027006},
  publisher = {Wiley},
  series = {Wiley series in probability and mathematical statistics. Probability and mathematical statistics},
  timestamp = {2012-09-09T11:27:10.000+0200},
  title = {Statistical analysis with missing data},
  url = {http://books.google.com/books?id=aYPwAAAAMAAJ},
  year = 2002
}

@article{robins1995semiparametric,
  title={Semiparametric efficiency in multivariate regression models with missing data},
  author={Robins, James M and Rotnitzky, Andrea},
  journal={Journal of the American Statistical Association},
  volume={90},
  number={429},
  pages={122--129},
  year={1995},
  publisher={Taylor \& Francis}
}

@article{Chen_2008,
   title={{Semiparametric Efficiency in GMM Models With Auxiliary Data}},
   volume={36},
   ISSN={0090-5364},
   url={http://dx.doi.org/10.1214/009053607000000947},
   DOI={10.1214/009053607000000947},
   number={2},
   journal={The Annals of Statistics},
   publisher={Institute of Mathematical Statistics},
   author={Chen, Xiaohong and Hong, Han and Tarozzi, Alessandro},
   year={2008},
   month=apr
}

@article{chiang2024chatbotarenaopenplatform,
      title={{Chatbot Arena: An Open Platform for Evaluating LLMs by Human Preference}}, 
      author={Wei-Lin Chiang and Lianmin Zheng and Ying Sheng and Anastasios Nikolas Angelopoulos and Tianle Li and Dacheng Li and Hao Zhang and Banghua Zhu and Michael Jordan and Joseph E. Gonzalez and Ion Stoica},
      year={2024},
      journal={arXiv preprint arXiv:2403.04132}
}

@book{stone1996course,
  title={A Course in Probability and Statistics},
  author={Stone, Charles Joel},
  volume={19},
  year={1996},
  publisher={Duxbury Press Belmont}
}

@article{brown2001interval,
  title={{Interval Estimation for a Binomial Proportion}},
  author={Brown, Lawrence D and Cai, T Tony and DasGupta, Anirban},
  journal={Statistical Science},
  volume={16},
  number={2},
  pages={101--133},
  year={2001},
  publisher={Institute of Mathematical Statistics}
}

@article{lee2025correctly,
  title={{How to Correctly Report LLM-as-a-Judge Evaluations}},
  author={Lee, Chungpa and Zeng, Thomas and Jeong, Jongwon and Sohn, Jy-yong and Lee, Kangwook},
  journal={arXiv preprint arXiv:2511.21140},
  year={2025}
}

@article{rosin2023estimating,
  title={{Estimating SARS-CoV-2 seroprevalence}},
  author={Rosin, Samuel P and Shook-Sa, Bonnie E and Cole, Stephen R and Hudgens, Michael G},
  journal={Journal of the Royal Statistical Society Series A: Statistics in Society},
  volume={186},
  number={4},
  pages={834--851},
  year={2023},
  publisher={Oxford University Press US}
}

@article{gonzalez2017review,
  title={{A Review on Quantification Learning}},
  author={Gonz{\'a}lez, Pablo and Casta{\~n}o, Alberto and Chawla, Nitesh V and Coz, Juan Jos{\'e} Del},
  journal={ACM Computing Surveys (CSUR)},
  volume={50},
  number={5},
  pages={1--40},
  year={2017},
  publisher={Association for Computing Machinery}
}

@article{fiksel2022generalized,
  title={{Generalized Bayes Quantification Learning under Dataset Shift}},
  author={Fiksel, Jacob and Datta, Abhirup and Amouzou, Agbessi and Zeger, Scott},
  journal={Journal of the American Statistical Association},
  volume={117},
  number={540},
  pages={2163--2181},
  year={2022},
  publisher={Taylor \& Francis}
}

@book{fuller2009measurement,
  title={Measurement Error Models},
  author={Fuller, Wayne A},
  year={2009},
  publisher={John Wiley \& Sons}
}

@article{ji2025predictionssurrogatesrevisitingsurrogate,
      title={{Predictions as Surrogates: Revisiting Surrogate Outcomes in the Age of AI}}, 
      author={Wenlong Ji and Lihua Lei and Tijana Zrnic},
      journal={arXiv preprint arXiv:2501.09731},
      year={2025}
}

@article{rogan1978estimating,
  title={Estimating prevalence from the results of a screening test},
  author={Rogan, Walter J and Gladen, Beth},
  journal={American Journal of Epidemiology},
  volume={107},
  number={1},
  pages={71--76},
  year={1978},
  publisher={Oxford University Press}
}

@article{angelopoulos2023prediction,
  title={Prediction-powered inference},
  author={Angelopoulos, Anastasios N and Bates, Stephen and Fannjiang, Clara and Jordan, Michael I and Zrnic, Tijana},
  journal={Science},
  volume={382},
  number={6671},
  pages={669--674},
  year={2023},
  publisher={American Association for the Advancement of Science}
}

@book{van2000asymptotic,
  title={Asymptotic Statistics},
  author={Van der Vaart, Aad W},
  volume={3},
  year={2000},
  publisher={Cambridge University Press}
}

@article{lang2014confidence,
  title={Confidence limits for prevalence of disease adjusted for estimated sensitivity and specificity},
  author={Lang, Zsolt and Reiczigel, Jen{\H{o}}},
  journal={Preventive Veterinary Medicine},
  volume={113},
  number={1},
  pages={13--22},
  year={2014},
  publisher={Elsevier}
}

@article{rister2025correcting,
  title={{Correcting the Measurement Errors of AI-Assisted Labeling in Image Analysis Using Design-Based Supervised Learning}},
  author={Rister Portinari Maranca, Alessandra and Chung, Jihoon and Hinck, Musashi and Wolsky, Adam D and Egami, Naoki and Stewart, Brandon M},
  journal={Sociological Methods \& Research},
  pages={00491241251333372},
  year={2025},
  publisher={SAGE Publications}
}

@article{angelopoulos2023ppipp,
  title={{PPI++: Efficient Prediction-Powered Inference}},
  author={Angelopoulos, Anastasios N and Duchi, John C and Zrnic, Tijana},
  journal={arXiv preprint arXiv:2311.01453},
  year={2023}
}

@article{salerno2025ipd,
  title={{ipd: an R package for conducting inference on predicted data}},
  author={Salerno, Stephen and Miao, Jiacheng and Afiaz, Awan and Hoffman, Kentaro and Neufeld, Anna and Lu, Qiongshi and McCormick, Tyler H and Leek, Jeffrey T},
  journal={Bioinformatics},
  volume={41},
  number={2},
  pages={btaf055},
  year={2025},
  publisher={Oxford University Press}
}

@article{miao2025assumption,
  title={Assumption-lean and data-adaptive post-prediction inference},
  author={Miao, Jiacheng and Miao, Xinran and Wu, Yixuan and Zhao, Jiwei and Lu, Qiongshi},
  journal={Journal of Machine Learning Research},
  volume={26},
  number={179},
  pages={1--31},
  year={2025}
}

@article{wang2020methods,
  title={Methods for correcting inference based on outcomes predicted by machine learning},
  author={Wang, Siruo and McCormick, Tyler H and Leek, Jeffrey T},
  journal={Proceedings of the National Academy of Sciences},
  volume={117},
  number={48},
  pages={30266--30275},
  year={2020},
  publisher={National Academy of Sciences}
}

@book{bickel1993efficient,
  title={Efficient and adaptive estimation for semiparametric models},
  author={Bickel, Peter J and Klaassen, Chris AJ and Bickel, Peter J and Ritov, Ya’acov and Klaassen, J and Wellner, Jon A and Ritov, YA'Acov},
  volume={4},
  year={1993},
  publisher={Springer}
}

\clearpage

\appendix
\begin{center}
    \textbf{\Large Supplementary Materials for Unifying Debiasing Methods for LLM-as-a-Judge Evaluations}
\end{center}

\section{Additional Results}
\subsection{Finite-sample variability in calibration estimates}
  \label{subsec:sim-calibration-bias}
The Rogan--Gladen estimator relies on plug-in estimates $\hat q_0$ and $\hat q_1$.
Figure~\ref{fig:qbias-diag} illustrates their finite-sample variability across simulation settings.
While these estimates are approximately unbiased, their variance can be substantial when the labeling budget is small.
At $1\%$ labeling ($m = 20$), the positive class may contain only $m_1 \approx 3$ observations,
yielding an RMSE of $0.3$ for $\hat q_1$, compared to $0.1$ for $\hat q_0$.

Because the RG estimator divides by $\hat q_0 + \hat q_1 - 1$, the resulting confidence intervals can be wide when $m$ is small
or when class imbalance leaves $m_1$ (or $m_0$) with few observations.

In contrast, PPI-type estimators and EIF-based approaches do not require explicit estimation of $(q_0, q_1)$.
Instead, they exploit unbiased correction to learn $\mathbb{E}[Y \mid \hat Y]$,
making them more stable in small-calibration regimes.

  \begin{figure}[htbp!]
      \centering
      \includegraphics[width=0.95\textwidth]{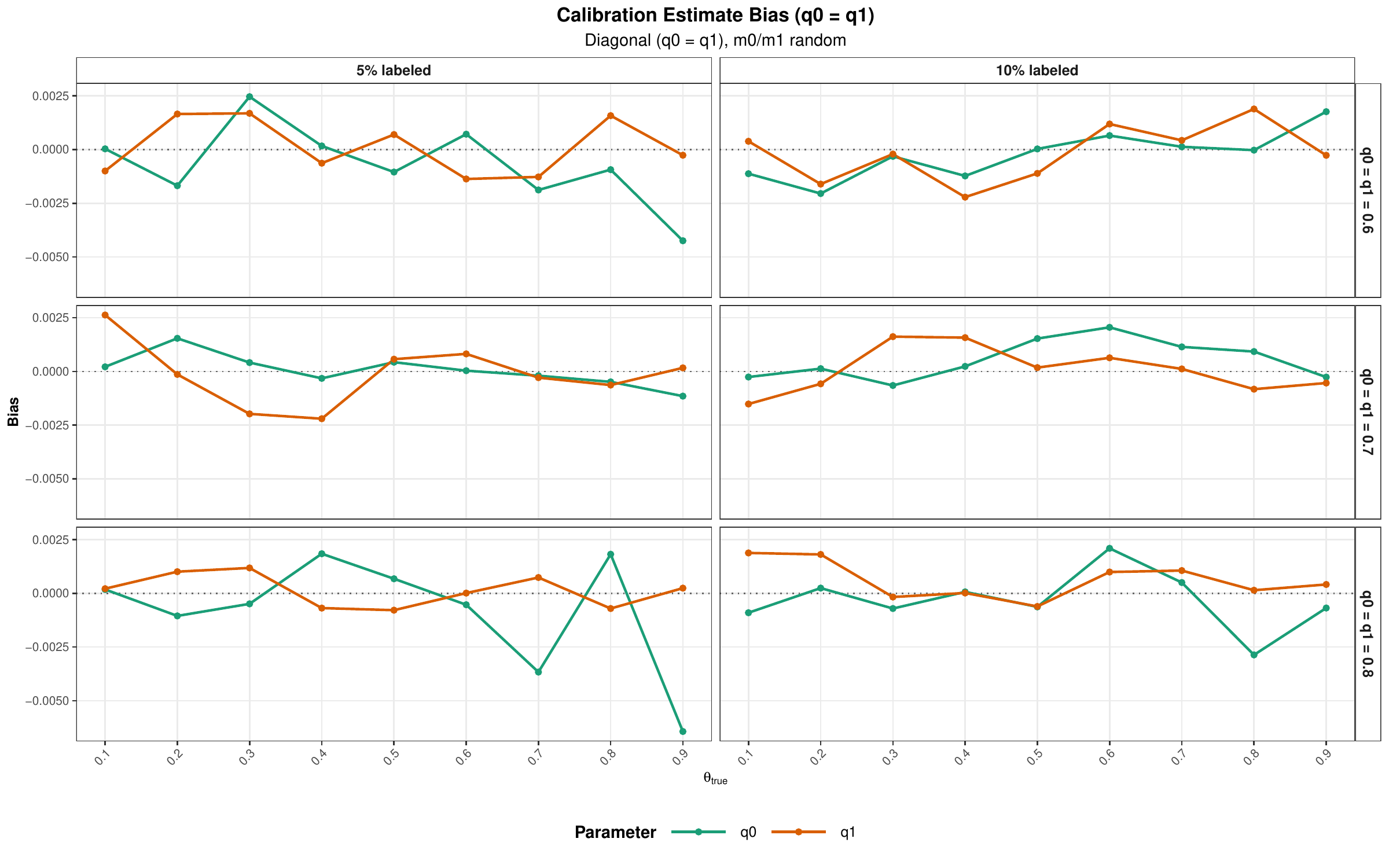}
      \caption{Finite-sample RMSE of calibration estimates $\hat q_0$ and $\hat q_1$.}
      \label{fig:qbias-diag}
  \end{figure}
\section{Proofs of Propositions}
\subsection{Proof of Proposition~\ref{prop:var-RG}}
\begin{proof}
Since the test sample consists of i.i.d.\ draws of $\hat Y\in\{0,1\}$ with $
p = \Pr(\hat Y=1)$,
it follows that $\hat Y_i \sim \mathrm{Bern}(p)$ for $i=1,\dots,n$. For the sample mean $\hat{p}=\frac1n\sum_{i=1}^n \hat Y_i$, as $n\to\infty$,
by the central limit theorem, we have that
\[
\sqrt{n}\,(\hat{p}-p)\;\xrightarrow{d}\;\mathcal{N}\!\bigl(0,p(1-p)\bigr).
\]

\noindent
On the calibration sample, define
\[
X_{ab}= \sum_{j=1}^m\mathbf{1}\{Y_j=a,\ \hat Y_j=b\},
\qquad \pi_{ab}=\Pr(Y=a,\hat Y=b),
\qquad a,b\in \{0,1\}.
\]
Then $X=(X_{00},X_{01},X_{10},X_{11})^\top$ satisfies
\[
X\sim\mathrm{Multinomial}\bigl(m;\,\pi_{00},\pi_{01},\pi_{10},\pi_{11}\bigr),
\]
where
\begin{align*}
    \pi_{00}&=\Pr(Y=0,\hat Y=0)=q_0(1-\theta),\\
    \pi_{01}&=\Pr(Y=0,\hat Y=1)=(1-q_0)(1-\theta),\\
    \pi_{10}&=\Pr(Y=1,\hat Y=0)=(1-q_1)\theta,\\
    \pi_{11}&=\Pr(Y=1,\hat Y=1)=q_1\theta.
\end{align*}
Let $\pi=(\pi_{00},\pi_{01},\pi_{10},\pi_{11})^\top$ and $\hat{P}(R=1)^{\mathrm{cal}}=X/m$. As $m\to\infty$,
by the multivariate central limit theorem,
\[
\sqrt{m}\,(\hat{P}(R=1)^{\mathrm{cal}}-\pi)
\;\xrightarrow{d}\;
\mathcal{N}\!\left(0,\,\mathrm{diag}(\pi)-\pi\pi^\top\right).
\]

\noindent
Since
\[
\hat q_1=\frac{\sum_{j=1}^m\mathbf{1}\{\hat Y_j=1,\ Y_j=1\}}{m_1},
\qquad 
\hat q_0=\frac{\sum_{j=1}^m\mathbf{1}\{\hat Y_j=0,\ Y_j=0\}}{m_0},
\]
it follows that
\[
\hat q_1=\frac{X_{11}}{X_{11}+X_{10}}=\frac{\hat \pi^{\mathrm{cal}}_{11}}{\hat{P}(R=1)^{\mathrm{cal}}_{11}+\hat{P}(R=1)^{\mathrm{cal}}_{10}},
\qquad 
\hat q_0=\frac{X_{00}}{X_{00}+X_{01}}=\frac{\hat{P}(R=1)^{\mathrm{cal}}_{00}}{\hat{P}(R=1)^{\mathrm{cal}}_{00}+\hat{P}(R=1)^{\mathrm{cal}}_{01}}.
\]

\noindent
Define $h:(0,1)^4\to(0,1)^2$ by
\[
h(x_{00},x_{01},x_{10},x_{11})
=
\left(
\frac{x_{00}}{x_{00}+x_{01}},
\ \frac{x_{11}}{x_{10}+x_{11}}
\right)^{\!\top},
\]
so that $h(\pi)=(q_0,q_1)^\top$ and $h(\hat{P}(R=1)^{\mathrm{cal}})=(\hat q_0,\hat q_1)^\top$.
Since $\pi_{00}+\pi_{01}=1-\theta>0$ and $\pi_{11}+\pi_{10}=\theta>0$, the mapping $h$ is continuously
differentiable at $\pi$. Its Jacobian at $\pi$ is
\[
\nabla h(\pi)
=
\begin{pmatrix}
\dfrac{\pi_{01}}{(\pi_{00}+\pi_{01})^2} &
-\dfrac{\pi_{00}}{(\pi_{00}+\pi_{01})^2} &
0 & 0 \\[10pt]
0 & 0 &
-\dfrac{\pi_{11}}{(\pi_{10}+\pi_{11})^2} &
\dfrac{\pi_{10}}{(\pi_{10}+\pi_{11})^2}
\end{pmatrix}.
\]
As $m\to\infty$, by the multivariate delta method,
\[
\sqrt{m}
\begin{pmatrix}
\hat q_0-q_0\\
\hat q_1-q_1
\end{pmatrix}
\xrightarrow{d}
\mathcal{N}\!\left(
0,\,
\nabla h(\pi)\bigl(\mathrm{diag}(\pi)-\pi\pi^\top\bigr)\nabla h(\pi)^\top
\right).
\]
\noindent
We note that $\theta = \pi_{11}+\pi_{10}$; therefore a direct calculation yields
\[
\nabla h(\pi)\bigl(\mathrm{diag}(\pi)-\pi\pi^\top\bigr)\nabla h(\pi)^\top
=
\begin{pmatrix}
\dfrac{q_0(1-q_0)}{1-\theta} & 0 \\
0 & \dfrac{q_1(1-q_1)}{\theta}
\end{pmatrix},
\] and 
\[
\sqrt{m}
\begin{pmatrix}
\hat q_0-q_0\\
\hat q_1-q_1
\end{pmatrix}
\xrightarrow{d}
\mathcal{N}\!\left(
0,\,
\begin{pmatrix}
\dfrac{q_0(1-q_0)}{1-\theta} & 0 \\
0 & \dfrac{q_1(1-q_1)}{\theta}
\end{pmatrix}
\right).
\]

\noindent
Since $n/m\to\gamma_1\in(0,\infty)$, we have $\sqrt{n}=\sqrt{n/m}\,\sqrt{m}$ and
$\sqrt{n/m}\to\sqrt{\gamma_1}$. By Slutsky's theorem,
\[
\sqrt{n}
\begin{pmatrix}
\hat q_0-q_0\\
\hat q_1-q_1
\end{pmatrix}
\xrightarrow{d}
\mathcal{N}\!\left(
0,\,
\gamma_1
\begin{pmatrix}
\dfrac{q_0(1-q_0)}{1-\theta} & 0 \\
0 & \dfrac{q_1(1-q_1)}{\theta}
\end{pmatrix}
\right).
\]
Moreover, since the test sample is independent of the calibration sample, $\hat{p}$ is independent
of $(\hat q_0,\hat q_1)$. Therefore, as $n\to\infty$,
\[
\sqrt{n}
\begin{pmatrix}
\hat{p}-p\\
\hat q_0-q_0\\
\hat q_1-q_1
\end{pmatrix}
\xrightarrow{d}
\mathcal{N}\!\left(
0,\,
\mathrm{diag}\!\left(
p(1-p),\
\gamma_1\frac{q_0(1-q_0)}{1-\theta},\
\gamma_1\frac{q_1(1-q_1)}{\theta}
\right)
\right).
\]

\noindent
Define $
g(p,q_0,q_1)=\frac{p+q_0-1}{q_0+q_1-1},$
so that $\theta=g(p,q_0,q_1)$. Algebra yields that
\[
\frac{\partial g}{\partial p}=\frac{1}{q_0+q_1-1},\qquad
\frac{\partial g}{\partial q_0}=\frac{1-\theta}{q_0+q_1-1},\qquad
\frac{\partial g}{\partial q_1}=-\frac{\theta}{q_0+q_1-1},
\]
and hence
\[
\nabla g(p,q_0,q_1)
=
\frac{1}{q_0+q_1-1}\bigl(1,\ 1-\theta,\ -\theta\bigr).
\]
By the multivariate delta method,
\[
\sqrt{n}\,(\hat\theta_{\mathrm{RG}}-\theta)
\xrightarrow{d}
\mathcal{N}\!\left(
0,\,
\nabla g(p,q_0,q_1)\ 
\mathrm{diag}\!\left(
\pi(1-\pi),\
\gamma_1\frac{q_0(1-q_0)}{1-\theta},\
\gamma_1\frac{q_1(1-q_1)}{\theta}
\right)\ 
\nabla g(p,q_0,q_1)^\top
\right).
\]

With algebra and the assumption that $\frac{N}{n}= \frac{m+n}{n}\to \frac{1+\gamma_1}{\gamma_1}$, we have that
\[
V_{\mathrm{RG}}
=
\left(\frac{1+\gamma_1}{\gamma_1}\right)\frac{1}{(q_0+q_1-1)^2}
\left\{
p(1-p)
+\gamma_1\bigl[(1-\theta)q_0(1-q_0)+\theta q_1(1-q_1)\bigr]
\right\}.
\]

\noindent

\end{proof}

\subsection{Proof of Proposition~\ref{prop:PPI-bias-var}}
\begin{proof}
We first use the following decomposition
\[
\hat\mu=\hat p=\frac{1}{n}\sum_{i=1}^n \hat Y_i,
\qquad
\hat\Delta=\frac{1}{m}\sum_{j=1}^m (\hat Y_j-Y_j),
\qquad
\hat\theta_{\mathrm{PPI}}=\hat\mu-\hat\Delta.
\]

\paragraph{Step 1: Moments of $\hat\mu$.}
Since $\hat Y_i\stackrel{\mathrm{iid}}{\sim}\mathrm{Bern}(p)$ in the test sample,
\[
\E[\hat\mu]=\E[\hat Y]=p,
\qquad
\Var(\hat\mu)=\frac{1}{n}\Var(\hat Y)=\frac{p(1-p)}{n}.
\]

\paragraph{Step 2: Moments of $\hat\Delta$.}
In the calibration sample, the summands $\hat Y_j-Y_j$ are i.i.d., hence
\[
\E[\hat\Delta]=\E[\hat Y-Y],
\qquad
\Var(\hat\Delta)=\frac{1}{m}\Var(\hat Y-Y).
\]
Because $\hat Y,Y\in\{0,1\}$, we have $\hat Y-Y\in\{-1,0,1\}$ and
\[
(\hat Y-Y)^2=\mathbf 1\{\hat Y\neq Y\}.
\]
Therefore,
\begin{equation}
\E[(\hat Y-Y)^2]=\Pr(\hat Y\neq Y)=\Pr(\hat Y=1,Y=0)+\Pr(\hat Y=0,Y=1).
\label{eq:ppi-err-second-moment}
\end{equation}
Using the definitions of $q_0,q_1$,
\[
\Pr(\hat Y=1,Y=0)=(1-q_0)(1-\theta),
\qquad
\Pr(\hat Y=0,Y=1)=(1-q_1)\theta,
\]
so \eqref{eq:ppi-err-second-moment} becomes
\begin{equation}
\E[(\hat Y-Y)^2]=(1-\theta)(1-q_0)+\theta(1-q_1).
\label{eq:ppi-err-second-moment-explicit}
\end{equation}

Next, since $\E[\hat Y]=p$ and $\E[Y]=\theta$,
\[
\E[\hat Y-Y]=\E[\hat Y]-\E[Y]=p-\theta,
\]
and hence
\begin{equation}
\Var(\hat Y-Y)=\E[(\hat Y-Y)^2]-\{\E[\hat Y-Y]\}^2
=(1-\theta)(1-q_0)+\theta(1-q_1)-(p-\theta)^2.
\label{eq:ppi-err-var-explicit}
\end{equation}
Combining with $\Var(\hat\Delta)=m^{-1}\Var(\hat Y-Y)$ yields
\[
\Var(\hat\Delta)
=
\frac{(1-\theta)(1-q_0)+\theta(1-q_1)-(\theta-p)^2}{m}.
\]

\paragraph{Step 3: Unbiasedness of $\hat\theta_{\mathrm{PPI}}$.}
By linearity of expectation,
\[
\E[\hat\theta_{\mathrm{PPI}}]
=\E[\hat\mu]-\E[\hat\Delta]
= p - (p-\theta)=\theta.
\]

\paragraph{Step 4: Finite-sample variance of $\hat\theta_{\mathrm{PPI}}$.}
Because the test and calibration samples are independent, $\hat\mu$ and $\hat\Delta$ are independent.
Thus,
\[
\Var(\hat\theta_{\mathrm{PPI}})
=\Var(\hat\mu-\hat\Delta)
=\Var(\hat\mu)+\Var(\hat\Delta)
=
\frac{p(1-p)}{n}
+
\frac{(1-\theta)(1-q_0)+\theta(1-q_1)-(\theta-p)^2}{m}.
\]

\paragraph{Step 5: Asymptotic normality.}
By the CLT,
\[
\sqrt{n}\,(\hat\mu-p)\xrightarrow{d}\mathcal N\bigl(0,p(1-p)\bigr),
\qquad
\sqrt{m}\,(\hat\Delta-(p-\theta))\xrightarrow{d}\mathcal N\bigl(0,\Var(\hat Y-Y)\bigr).
\]
If $n,m\to\infty$ with $n/m\to\gamma_1\in(0,\infty)$, then
\[
\sqrt{n}\,(\hat\Delta-(p-\theta))
=
\sqrt{\frac{n}{m}}\ \sqrt{m}\,(\hat\Delta-(p-\theta))
\xrightarrow{d}\mathcal N\bigl(0,\gamma_1\,\Var(\hat Y-Y)\bigr),
\]
by Slutsky's theorem. Independence of the two samples implies the joint convergence with independent
limits, and therefore
\[
\sqrt{n}\,(\hat\theta_{\mathrm{PPI}}-\theta)
=
\sqrt{n}\,(\hat\mu-p)\;-\;\sqrt{n}\,(\hat\Delta-(p-\theta))
\xrightarrow{d}
\mathcal N\!\left(
0,\,
p(1-p)+\gamma_1\,\Var(\hat Y-Y)
\right).
\]
With algebra and the assumption that $\frac{N}{n}= \frac{m+n}{n}\to \frac{1+\gamma_1}{\gamma_1}$, we have that 
\[
V_{\mathrm{PPI}}
=
\frac{1+\gamma_1}{\gamma_1} \left\{p(1-p)
+
\gamma_1\Bigl[(1-\theta)(1-q_0)+\theta(1-q_1)-(\theta-p)^2\Bigr]\right\}.
\]
\end{proof}

\subsection{Proof of Proposition~\ref{prop:PPIplus-bias-var}}
\begin{proof}
We can rewrite the \texttt{PPI++} estimator with a fixed tuning parameter $\lambda\in\mathbb{R}$ as:
\[
\hat\theta_{\mathrm{\texttt{PPI++}}}(\lambda)
=
\frac{1}{m}\sum_{j=1}^m Y_j
\;+\;
\lambda\left(
\frac{1}{n}\sum_{i=1}^n \hat Y_i
-
\frac{1}{m}\sum_{j=1}^m \hat Y_j
\right)
=
\lambda\,\hat p
+
\Bigl(\hat\theta_{\mathrm{class}}-\lambda\,\bar{\hat Y}_{\mathrm{cal}}\Bigr),
\]
where
\[
\hat\theta_{\mathrm{class}}=\frac{1}{m}\sum_{j=1}^m Y_j,\qquad
\bar{\hat Y}_{\mathrm{cal}}=\frac{1}{m}\sum_{j=1}^m \hat Y_j,\qquad
\hat p=\frac{1}{n}\sum_{i=1}^n \hat Y_i.
\]

\paragraph{Unbiasedness.}
Under the joint model \eqref{eq:model-gen}, $\E[\hat\theta_{\mathrm{class}}]=\E[Y]=\theta$ and
$\E[\hat p]=\E[\bar{\hat Y}_{\mathrm{cal}}]=\E[\hat Y]=p$. Hence,
\[
\E\!\left[\hat\theta_{\mathrm{\texttt{PPI++}}}(\lambda)\right]
=
\E[\hat\theta_{\mathrm{class}}]+\lambda\Bigl(\E[\hat p]-\E[\bar{\hat Y}_{\mathrm{cal}}]\Bigr)
=
\theta+\lambda(p-p)=\theta.
\]

\paragraph{Finite-sample variance.}
By independence of the test and calibration samples, $\hat p$ is independent of
$(\hat\theta_{\mathrm{class}},\bar{\hat Y}_{\mathrm{cal}})$. Therefore, we have that
\[
\Var\!\left(\hat\theta_{\mathrm{\texttt{PPI++}}}(\lambda)\right)
=
\Var(\lambda \hat p)
+
\Var\!\left(\hat\theta_{\mathrm{class}}-\lambda\bar{\hat Y}_{\mathrm{cal}}\right).
\]
Moreover,
\[
\Var(\hat p)=\frac{1}{n}\Var(\hat Y)=\frac{p(1-p)}{n},
\;
\Var(\hat\theta_{\mathrm{class}})=\frac{1}{m}\Var(Y)=\frac{\theta(1-\theta)}{m},
\;
\Var(\bar{\hat Y}_{\mathrm{cal}})=\frac{1}{m}\Var(\hat Y)=\frac{p(1-p)}{m}.
\]
For the covariance term, we simplify it as
\[
\Cov(\hat\theta_{\mathrm{class}},\bar{\hat Y}_{\mathrm{cal}})
=
\Cov\!\left(\frac{1}{m}\sum_{j=1}^m Y_j,\ \frac{1}{m}\sum_{j=1}^m \hat Y_j\right)
=
\frac{1}{m}\Cov(Y,\hat Y),
\]
since cross-sample terms vanish by independence across $j$.
Also,
\[
\Cov(Y,\hat Y)=\E[Y\hat Y]-\E[Y]\E[\hat Y]
=\Pr(Y=1,\hat Y=1)-\theta p
=\theta q_1-\theta p
=\theta(q_1-p).
\]
Putting these together gives
\[
\Var\!\left(\hat\theta_{\mathrm{\texttt{PPI++}}}(\lambda)\right)
=
\frac{\theta(1-\theta)}{m}
+
\lambda^2\left(\frac{p(1-p)}{n}+\frac{p(1-p)}{m}\right)
-
\frac{2\lambda}{m}\,\theta(q_1-p).
\]

\paragraph{Asymptotic normality.}
By the CLT,
\[
\sqrt{n}\,(\hat p-p)\xrightarrow{d}\mathcal N\bigl(0,p(1-p)\bigr).
\]
For the calibration component, define i.i.d.\ variables
\[
W_j=Y_j-\lambda \hat Y_j,
\qquad
\E[W_j]=\theta-\lambda p,
\qquad
\Var(W_j)=\Var(Y-\lambda\hat Y)=\theta(1-\theta)+\lambda^2 p(1-p)-2\lambda\,\theta(q_1-p).
\]
Then
\[
\hat\theta_{\mathrm{class}}-\lambda\bar{\hat Y}_{\mathrm{cal}}
=\frac{1}{m}\sum_{j=1}^m W_j,
\]
and hence, again by the CLT,
\[
\sqrt{m}\left\{\bigl(\hat\theta_{\mathrm{class}}-\lambda\bar{\hat Y}_{\mathrm{cal}}\bigr)-(\theta-\lambda p)\right\}
\xrightarrow{d}
\mathcal N\!\left(0,\ \Var(Y-\lambda\hat Y)\right).
\]
If $n,m\to\infty$ with $n/m\to\gamma_1\in(0,\infty)$, then
$\sqrt{n}=\sqrt{n/m}\,\sqrt{m}$ and $\sqrt{n/m}\to\sqrt{\gamma_1}$, so by Slutsky's theorem,
\[
\sqrt{n}\left\{\bigl(\hat\theta_{\mathrm{class}}-\lambda\bar{\hat Y}_{\mathrm{cal}}\bigr)-(\theta-\lambda p)\right\}
\xrightarrow{d}
\mathcal N\!\left(0,\ \gamma_1\,\Var(Y-\lambda\hat Y)\right).
\]
Finally, using independence between the test and calibration samples,
\begin{align*}
\sqrt{n}\bigl(\hat\theta_{\mathrm{\texttt{PPI++}}}(\lambda)-\theta\bigr)
&=
\lambda\,\sqrt{n}(\hat p-p)
+
\sqrt{n}\left\{\bigl(\hat\theta_{\mathrm{class}}-\lambda\bar{\hat Y}_{\mathrm{cal}}\bigr)-(\theta-\lambda p)\right\}\\
&\xrightarrow{d}
\mathcal N\!\left(0,\ \lambda^2 p(1-p)+\gamma_1\,\Var(Y-\lambda\hat Y)\right),
\end{align*}
where
\begin{align*}
\Var(Y-\lambda\hat Y)
&=\Var(Y)+\lambda^2\Var(\hat Y)-2\lambda\,\Cov(Y,\hat Y)\\
&=\theta(1-\theta)+\lambda^2 p(1-p)-2\lambda\,\theta(q_1-p).
\end{align*}
Plugging this into the asymptotic variance and the $\frac{N}{n}$ scaling gives us
\begin{align*}
V_{\mathrm{\texttt{PPI++}}}(\lambda)
&=\frac{1+\gamma_1}{\gamma_1}\left\{\lambda^2 p(1-p)+\gamma_1\Bigl[\theta(1-\theta)+\lambda^2 p(1-p)-2\lambda\,\theta(q_1-p)\Bigr]\right\}\\
&=\frac{1+\gamma_1}{\gamma_1}\left\{\gamma_1\,\theta(1-\theta)+\lambda^2(1+\gamma_1)\,p(1-p)-2\lambda\,\gamma_1\,\theta(q_1-p)\right\}.
\end{align*}
\end{proof}

\subsection{Proof of Proposition~\ref{prop:mle-asymp}}
\begin{proof}
Write the combined log-likelihood as
\[
\ell_{n,m}(\eta)
=
\sum_{i=1}^n \ell_{\mathrm{test}}(\hat Y_i;\eta)
+
\sum_{j=1}^m \ell_{\mathrm{cal}}(Y_j,\hat Y_j;\eta),
\qquad
\eta=(\theta,q_0,q_1)^\top,
\]
where the test sample contributes only $\hat Y$ and the calibration sample contributes $(Y,\hat Y)$.

\paragraph{Step 1: Scores and Fisher information.}
For a test-sample observation $\hat Y\in\{0,1\}$, the model implies
\[
P(\hat Y=1)=p(\eta)=(1-\theta)(1-q_0)+\theta q_1,
\]
hence
\[
\ell_{\mathrm{test}}(\hat Y;\eta)=\hat Y\log p+(1-\hat Y)\log(1-p).
\]
Let
\[
g(\eta)=\nabla_\eta p(\eta)=\bigl(q_0+q_1-1,\;-(1-\theta),\;\theta\bigr)^\top.
\]
Taking derivative with respect to $\eta$ yields the score
\[
\dot\ell_{\mathrm{test}}(\hat Y;\eta)
=
\frac{\hat Y-p}{p(1-p)}\,g(\eta).
\]
Since $\Var(\hat Y)=p(1-p)$, the (per-observation) Fisher information from the test sample is
\[
I_{\mathrm{test}}(\eta)
=
\mathbb E\!\left[\dot\ell_{\mathrm{test}}(\hat Y;\eta)\dot\ell_{\mathrm{test}}(\hat Y;\eta)^\top\right]
=
\frac{1}{p(1-p)}\,g(\eta)g(\eta)^\top,
\]
which matches the stated matrix.

For a calibration-sample observation $(Y,\hat Y)$, the complete-data likelihood factorizes as
\[
P(Y=y,\hat Y=\hat y)
=
P(Y=y)\,P(\hat Y=\hat y\mid Y=y),
\]
where $P(Y=1)=\theta$, $P(\hat Y=1\mid Y=0)=1-q_0$, and $P(\hat Y=1\mid Y=1)=q_1$.
Thus
\begin{align*}
\ell_{\mathrm{cal}}(Y,\hat Y;\eta)
&=
Y\log\theta+(1-Y)\log(1-\theta) \\
&\quad+(1-Y)\Bigl[\hat Y\log(1-q_0)+(1-\hat Y)\log q_0\Bigr]
+Y\Bigl[\hat Y\log q_1+(1-\hat Y)\log(1-q_1)\Bigr].
\end{align*}
The score components are
\[
\dot\ell_{\theta}=\frac{Y-\theta}{\theta(1-\theta)},\qquad
\dot\ell_{q_0}=(1-Y)\Bigl(\frac{1-\hat Y}{q_0}-\frac{\hat Y}{1-q_0}\Bigr),\qquad
\dot\ell_{q_1}=Y\Bigl(\frac{\hat Y}{q_1}-\frac{1-\hat Y}{1-q_1}\Bigr).
\]

 We first compute the off-diagnol entries in the information matrix. 

\paragraph{Cross-moment $\mathbb{E}[\dot\ell_\theta \dot\ell_{q_0}]$:}
   $\dot\ell_{q_0}$ equals 0 when $Y=1$. Using iterated expectations, we have that
  $$\mathbb{E}[\dot\ell_\theta \dot\ell_{q_0}] = \mathbb{E}[\mathbb{E}[\dot\ell_\theta \dot\ell_{q_0} \mid Y]] = (1-\theta)\mathbb{E}[\dot\ell_\theta \dot\ell_{q_0} \mid Y=0].$$

  Given $Y=0$, $\dot\ell_\theta = \frac{-\theta}{\theta(1-\theta)} = \frac{-1}{1-\theta}$, so:

  $$\mathbb{E}[\dot\ell_\theta \dot\ell_{q_0} \mid Y=0] = \frac{-1}{1-\theta}\mathbb{E}[\dot\ell_{q_0} \mid Y=0] = \frac{-1}{1-\theta} \times 0 = 0.$$

\paragraph{Cross-moment $\mathbb{E}[\dot\ell_\theta \dot\ell_{q_1}]$:}
  First note that $\dot\ell_{q_1} = 0$ when $Y=0$, therefore we have that

  $$\mathbb{E}[\dot\ell_\theta \dot\ell_{q_1}] = \theta\mathbb{E}[\dot\ell_\theta \dot\ell_{q_1} \mid Y=1].$$

  Given $Y=1$: $\dot\ell_\theta = \frac{1}{\theta}$, we conclude that
  $$\mathbb{E}[\dot\ell_\theta \dot\ell_{q_1} \mid Y=1] = \frac{1}{\theta}\mathbb{E}[\dot\ell_{q_1} \mid Y=1] = \frac{1}{\theta} \times 0 = 0.$$

\paragraph{Cross-moment $\mathbb{E}[\dot\ell_{q_0} \dot\ell_{q_1}]$:}
  $\dot\ell_{q_0} \dot\ell_{q_1} = (1-Y) \cdot Y \cdot (\cdots)$; since $Y \in 0,1$ implies $(1-Y) \cdot Y = 0$ always. 

Therefore, the calibration Fisher information is diagonal with entries
\[
I_{\mathrm{cal}}(\eta)
=
\mathrm{diag}\!\left(
\mathbb E[\dot\ell_\theta^2],\ \mathbb E[\dot\ell_{q_0}^2],\ \mathbb E[\dot\ell_{q_1}^2]
\right)
=
\mathrm{diag}\!\left(
\frac{1}{\theta(1-\theta)},\
\frac{1-\theta}{q_0(1-q_0)},\
\frac{\theta}{q_1(1-q_1)}
\right).
\]

\paragraph{Step 2: Asymptotic normality of the joint MLE.}
Let $N=n+m$ and write the full score as
\[
\dot\ell_{n,m}(\eta)=\sum_{i=1}^n \dot\ell_{\mathrm{test}}(\hat Y_i;\eta)
+\sum_{j=1}^m \dot\ell_{\mathrm{cal}}(Y_j,\hat Y_j;\eta).
\]
By construction, $\mathbb E[\dot\ell_{n,m}(\eta)]=0$, and by independence,
\[
\Var\!\Bigl(\frac{1}{\sqrt N}\dot\ell_{n,m}(\eta)\Bigr)
=
\frac{n}{N}\,I_{\mathrm{test}}(\eta)+\frac{m}{N}\,I_{\mathrm{cal}}(\eta)
\ \to\
\frac{\gamma_1}{1+\gamma_1}I_{\mathrm{test}}(\eta)+\frac{1}{1+\gamma_1}I_{\mathrm{cal}}(\eta)
=:I_{\gamma_1}(\eta),
\]
since $n/m\to\gamma_1\in(0,\infty)$ implies $n/N\to\gamma_1/(1+\gamma_1)$ and $m/N\to 1/(1+\gamma_1)$.
A multivariate CLT therefore gives
\[
\frac{1}{\sqrt N}\dot\ell_{n,m}(\eta)\ \xrightarrow{d}\ \mathcal N\!\bigl(0,\ I_{\gamma_1}(\eta)\bigr).
\]
Under standard regularity conditions, we have that
\[
\sqrt N\,(\hat\eta_{\mathrm{MLE}}-\eta)
=
\Bigl(-\frac{1}{N}\ddot\ell_{n,m}(\tilde\eta)\Bigr)^{-1}
\Bigl(\frac{1}{\sqrt N}\dot\ell_{n,m}(\eta)\Bigr)
\ \xrightarrow{d}\
\mathcal N\!\left(0,\ I_{\gamma_1}(\eta)^{-1}\right).
\]

\paragraph{Step 3: Closed form for the $(1,1)$ entry.}
Write $w=\gamma_1/(1+\gamma_1)$ and note that
\[
I_{\gamma_1}(\eta)=w\,I_{\mathrm{test}}(\eta)+\frac{1}{1+\gamma_1}\,I_{\mathrm{cal}}(\eta)
=
D + uu^\top,
\]
where
\[
D=\frac{1}{1+\gamma_1}I_{\mathrm{cal}}(\eta)
\quad\text{(diagonal)},\qquad
u=\sqrt{\frac{w}{p(1-p)}}\,g(\eta).
\]
Since $D$ is invertible (parameters in the interior), the Sherman--Morrison formula gives
\[
(D+uu^\top)^{-1}
=
D^{-1}-\frac{D^{-1}uu^\top D^{-1}}{1+u^\top D^{-1}u}.
\]
Here,
\[
D^{-1}
=(1+\gamma_1)\,
\mathrm{diag}\!\left(
\theta(1-\theta),\ \frac{q_0(1-q_0)}{1-\theta},\ \frac{q_1(1-q_1)}{\theta}
\right),
\]
and a direct computation shows
\[
u^\top D^{-1}u
=
\frac{\gamma_1}{p(1-p)}
\Bigl[(q_0+q_1-1)^2\theta(1-\theta)+(1-\theta)q_0(1-q_0)+\theta q_1(1-q_1)\Bigr].
\]
Moreover, the first component satisfies
\[
\bigl(D^{-1}u\bigr)_1^2
=
\frac{\gamma_1(1+\gamma_1)\,\theta^2(1-\theta)^2\,(q_0+q_1-1)^2}{p(1-p)}.
\]
Substituting into the Sherman--Morrison expression and simplifying yields
\[
\bigl[I_{\gamma_1}(\eta)^{-1}\bigr]_{11}
=
(1+\gamma_1)\,\theta(1-\theta)\,
\frac{\,p(1-p)+\gamma_1\Bigl[(1-\theta)q_0(1-q_0)+\theta q_1(1-q_1)\Bigr]\;}
{\,p(1-p)+\gamma_1\Bigl[(q_0+q_1-1)^2\theta(1-\theta)+(1-\theta)q_0(1-q_0)+\theta q_1(1-q_1)\Bigr]\;},
\]
which is the stated closed form. The marginal asymptotic normality of $\hat\theta_{\mathrm{MLE}}$ follows by taking the
first coordinate of the multivariate limit.
\end{proof}

\subsection{Proof of Proposition~\ref{prop:eif-modelA}}
\begin{proof}
Recall the observed data are $O=(R,RY,\hat Y)$, where $R\in\{0,1\}$ indicates whether $Y$ is observed and the labeling mechanism satisfies $P(R=1\mid Y,\hat Y)=P(R=1)\in(0,1)$.  

Define
\[
\mu(\hat Y)=\E[Y\mid \hat Y],\qquad \theta=\E[Y]=\E\bigl[\mu(\hat Y)\bigr].
\]

We derive the efficient influence function by identifying the \emph{canonical gradient} of $\theta$ in the
observed-data model.

\paragraph{Step 1: Tangent space representation.}
Let $\{P_\varepsilon:\varepsilon\in(-\delta,\delta)\}$ be any regular parametric submodel through the true distribution $P_0$
(with $P_{\varepsilon=0}=P_0$). Under $P(R=1\mid Y,\hat Y)=\pi$, the observed-data density factorizes as
\[
p_\varepsilon(o)
=
p_{\varepsilon,R}(r)\,p_{\varepsilon,\hat Y}(\hat y)\,
\bigl\{p_{\varepsilon,Y\mid \hat Y}(y\mid \hat y)\bigr\}^{r},
\]
so the observed-data score admits the decomposition
\begin{equation}
\label{eq:score-decomp}
s_\varepsilon(O)
=
s_R(R)+s_{\hat Y}(\hat Y)+R\,s_{Y\mid \hat Y}(Y,\hat Y),
\end{equation}
where the components satisfy the usual mean-zero constraints
\[
\E\!\left[s_R(R)\right]=0,\qquad
\E\!\left[s_{\hat Y}(\hat Y)\right]=0,\qquad
\E\!\left[s_{Y\mid \hat Y}(Y,\hat Y)\mid \hat Y\right]=0.
\]
Thus the (closure of the) observed-data tangent space consists of all functions of the form
\eqref{eq:score-decomp}.

\paragraph{Step 2: Pathwise derivative of $\theta$.}
Along the submodel, $\theta(\varepsilon)=\E_\varepsilon[Y]=\E_\varepsilon[\mu_\varepsilon(\hat Y)]$, where
$\mu_\varepsilon(\hat y)=\E_\varepsilon[Y\mid \hat Y=\hat y]$.
Differentiating at $\varepsilon=0$ and using standard score calculus gives
\begin{align}
\left.\frac{d}{d\varepsilon}\theta(\varepsilon)\right|_{\varepsilon=0}
&=
\E\!\left[\{\mu(\hat Y)-\theta\}\,s_{\hat Y}(\hat Y)\right]
+
\E\!\left[(Y-\mu(\hat Y))\,s_{Y\mid \hat Y}(Y,\hat Y)\right].
\label{eq:pathwise-deriv}
\end{align}
There is no contribution from $s_R(R)$ because $\theta$ depends only on the marginal law of $(Y,\hat Y)$.

\paragraph{Step 3: Candidate influence function and verification.}
Consider
\begin{equation}
\label{eq:eif-cand}
\phi^\star(O)
=
\mu(\hat Y)-\theta+\frac{R}{\pi}\{Y-\mu(\hat Y)\}.
\end{equation}
First, $\E[\phi^\star(O)]=0$ since $\E[\mu(\hat Y)]=\theta$ and
\[
\E\!\left[\frac{R}{\pi}\{Y-\mu(\hat Y)\}\,\middle|\,\hat Y\right]
=
\frac{\E[R\mid \hat Y]}{\pi}\,\E[Y-\mu(\hat Y)\mid \hat Y]
=0.
\]

Next, we show that $\phi^\star$ represents the pathwise derivative: for any score $s(O)$ of the form
\eqref{eq:score-decomp},
\begin{align*}
\E\!\left[\phi^\star(O)\,s(O)\right]
&=
\E\!\left[\phi^\star(O)\,s_R(R)\right]
+\E\!\left[\phi^\star(O)\,s_{\hat Y}(\hat Y)\right]
+\E\!\left[\phi^\star(O)\,R s_{Y\mid \hat Y}(Y,\hat Y)\right].
\end{align*}
The first term is zero because $\E[\phi^\star(O)\mid R]=0$ (by the same calculation as above) and $\E[s_R(R)]=0$.
For the second term, the residual part drops out:
\[
\E\!\left[\frac{R}{\pi}\{Y-\mu(\hat Y)\}\,s_{\hat Y}(\hat Y)\right]
=
\E\!\left[
s_{\hat Y}(\hat Y)\,
\E\!\left[\frac{R}{\pi}\{Y-\mu(\hat Y)\}\,\middle|\,\hat Y\right]\right]
=0,
\]
so
\[
\E\!\left[\phi^\star(O)\,s_{\hat Y}(\hat Y)\right]
=
\E\!\left[\{\mu(\hat Y)-\theta\}\,s_{\hat Y}(\hat Y)\right].
\]
For the third term, note that $\E[s_{Y\mid \hat Y}(Y,\hat Y)\mid \hat Y]=0$ implies
$\E[R\,s_{Y\mid \hat Y}(Y,\hat Y)\mid \hat Y]=\pi\cdot 0=0$, hence
\[
\E\!\left[\{\mu(\hat Y)-\theta\}\,R s_{Y\mid \hat Y}(Y,\hat Y)\right]=0,
\]
and therefore
\[
\E\!\left[\phi^\star(O)\,R s_{Y\mid \hat Y}(Y,\hat Y)\right]
=
\E\!\left[\frac{R}{\pi}\{Y-\mu(\hat Y)\}\,R s_{Y\mid \hat Y}(Y,\hat Y)\right]
=
\E\!\left[(Y-\mu(\hat Y))\,s_{Y\mid \hat Y}(Y,\hat Y)\right],
\]
using $R^2=R$ and $\E[R f(Y,\hat Y)] = \pi \E[f(Y,\hat Y)]$ under $P(R=1\mid Y,\hat Y)=\pi$.

Combining the pieces yields
\[
\E\!\left[\phi^\star(O)\,s(O)\right]
=
\E\!\left[\{\mu(\hat Y)-\theta\}\,s_{\hat Y}(\hat Y)\right]
+
\E\!\left[(Y-\mu(\hat Y))\,s_{Y\mid \hat Y}(Y,\hat Y)\right],
\]
which matches the pathwise derivative in \eqref{eq:pathwise-deriv}. Hence $\phi^\star$ is the canonical gradient of $\theta$
in the observed-data model, and therefore the efficient influence function. This is exactly \eqref{eq:eif-modelA}.
\end{proof}

\subsection{Proof of Proposition~\ref{prop:eif-estimator}}
\begin{proof}
  \textbf{Step 1: Derivation of the estimator.}
By Proposition~\ref{prop:eif-modelA}, an efficient influence function for $\theta$
is
\[
\phi^\star(R,Y,\hat Y)
=
\mu(\hat Y)-\theta
+\,\frac{R}{P(R=1)}\bigl\{Y-\mu(\hat Y)\bigr\}.
\]
The standard EIF (one-step) estimator is obtained by solving the empirical
estimating equation with $\mu$ replaced by $\hat\mu$ and ${P}(R=1)$ replaced by $\hat{P}(R=1)$:
\[
0=\frac{1}{N}\sum_{i=1}^{N}\phi^\star\bigl(R_i,Y_i,\hat Y_i;\theta,\hat\mu,\hat{P}(R=1)\bigr).
\]
Plugging in $\phi^\star$ and expanding gives
\[
0=\frac{1}{n+m}\sum_{i=1}^{n+m}\left[
\hat\mu(\hat Y_i)-\theta
+\frac{R_i}{\hat{P}(R=1)}\{Y_i-\hat\mu(\hat Y_i)\}
\right].
\]
Rearranging for $\theta$ yields:
\[
\theta
=
\frac{1}{N}\sum_{i=1}^{N}\hat\mu(\hat Y_i)
+\frac{1}{N}\sum_{i=1}^{N}\frac{R_i}{\hat{P}(R=1)}\{Y_i-\hat\mu(\hat Y_i)\}.
\]
Finally, under our setup
${P}(R=1)=m/(n+m)$, so plugging that as our estimate gives us \eqref{eq:eif-est}.

\textbf{Step 2: Asymptotic normality.}
  By standard M-estimation theory for one-step estimators~\citep{bickel1993efficient},
  \[
  \sqrt{N}(\hat\theta_{\mathrm{EIF}} - \theta) = \frac{1}{\sqrt{N}}\sum_{i=1}^{N}\phi^\star(R_i,Y_i,\hat Y_i) + o_p(1).
  \]
  The central limit theorem then gives $\sqrt{N}(\hat\theta_{\mathrm{EIF}} - \theta) \xrightarrow{d} \mathcal{N}(0, V_{\mathrm{EIF}})$ where $V_{\mathrm{EIF}} = \mathrm{Var}(\phi^\star)$.

 \textbf{Step 3: Variance calculation.}
  Write $\phi^\star = A + B$ where $A = \mu(\hat Y) - \theta$ and $B = (R/P(R=1))(Y - \mu(\hat Y))$. Since $\mathbb{E}[Y - \mu(\hat Y) \mid \hat Y] = 0$, we have $\mathrm{Cov}(A, B) = 0$, so
  \[
  V_{\mathrm{EIF}} = \mathrm{Var}(A) + \mathrm{Var}(B).
  \]

  For the first term:
  \[
  \mathrm{Var}(A) = \mathrm{Var}(\mu(\hat Y)) = p(1-p)(\mu_1 - \mu_0)^2.
  \]

  For the second term, using $R^2 = R$ and independence of $R$ from $(Y, \hat Y)$:
  \[
  \mathrm{Var}(B) = \frac{1}{P(R=1)^2}\mathbb{E}[R(Y-\mu(\hat Y))^2] = \frac{1}{P(R=1)}\mathbb{E}[\mathrm{Var}(Y \mid \hat Y)].
  \]
  Since $P(R=1) = m/N = 1/(1+\gamma_1)$ where $\gamma_1 = n/m$, we have
  \[
  \mathrm{Var}(B) = (1+\gamma_1)\bigl[(1-p)\mu_0(1-\mu_0) + p\,\mu_1(1-\mu_1)\bigr].
  \]

  \textbf{Step 4: Simplification for binary case.}
  Substituting $\mu_1 = q_1\theta/p$, $\mu_0 = (1-q_1)\theta/(1-p)$, and $p = q_1\theta + (1-q_0)(1-\theta)$, we obtain after algebra:
  \begin{align*}
  \mu_1 - \mu_0 &= \frac{\theta(1-\theta)(q_0+q_1-1)}{p(1-p)}, \\
  (1-p)\mu_0(1-\mu_0) + p\,\mu_1(1-\mu_1) &= \frac{\theta(1-\theta)[q_0(1-q_0)(1-\theta) + q_1(1-q_1)\theta]}{p(1-p)}.
  \end{align*}
  Combining yields \eqref{eq:eif-asymp}.
  \end{proof}

\subsection{Asymptotic equivalence of $V_{\text{MLE}}$ and $V_{\text{EIF}}$}

  \begin{proposition}
  \label{prop:mle-eif-equivalence}
  Under the model \eqref{eq:model-gen} with $\gamma_1 = n/m$, the asymptotic variance of the MLE equals the asymptotic variance of the EIF estimator:
  \[
  V_{\mathrm{MLE}} := \bigl[I_{\gamma_1}(\eta)^{-1}\bigr]_{11} = V_{\mathrm{EIF}}.
  \]
  \end{proposition}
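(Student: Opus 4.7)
The plan is to prove $V_{\mathrm{MLE}} = V_{\mathrm{EIF}}$ by reducing both closed forms to a common expression using a single algebraic identity linking $p(1-p)$ to the sensitivity and specificity parameters. The key observation is the law-of-total-variance decomposition of $\hat Y$: since $\E[\hat Y \mid Y] = Y(q_0+q_1-1) + (1-q_0)$ and $\Var(\hat Y \mid Y)$ equals $q_0(1-q_0)$ when $Y=0$ and $q_1(1-q_1)$ when $Y=1$, we get
\[
p(1-p) \;=\; \theta(1-\theta)(q_0+q_1-1)^2 \;+\; \theta\, q_1(1-q_1) \;+\; (1-\theta)\, q_0(1-q_0).
\]
This identity is the algebraic ``glue'' that makes the two apparently different variance expressions coincide.

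Concretely, I would introduce the shorthand
\[
A \,=\, p(1-p),\qquad B \,=\, (1-\theta)\,q_0(1-q_0) + \theta\, q_1(1-q_1),\qquad C \,=\, \theta(1-\theta)(q_0+q_1-1)^2,
\]
so that the identity above reads $A = B + C$. In this notation, the expressions from Propositions~\ref{prop:mle-asymp} and~\ref{prop:eif-estimator} become
\[
V_{\mathrm{MLE}} \;=\; (1+\gamma_1)\,\theta(1-\theta)\cdot \frac{A + \gamma_1 B}{A + \gamma_1(C+B)},\qquad
V_{\mathrm{EIF}} \;=\; \frac{\theta(1-\theta)}{A}\bigl[C + (1+\gamma_1)B\bigr].
\]
Substituting $B + C = A$ into the denominator of $V_{\mathrm{MLE}}$ collapses it to $(1+\gamma_1)A$, giving $V_{\mathrm{MLE}} = \theta(1-\theta)\,(A + \gamma_1 B)/A$. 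Substituting $C = A - B$ into $V_{\mathrm{EIF}}$ gives $\theta(1-\theta)\bigl(A - B + (1+\gamma_1)B\bigr)/A = \theta(1-\theta)\,(A + \gamma_1 B)/A$. Both collapse to the same quantity, completing the proof.

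The main ``obstacle'' here is really bookkeeping: carefully matching the numerator and denominator of the MLE expression (which arises from Sherman--Morrison inversion of $I_{\gamma_1}(\eta)$) against the EIF variance (which arises from a conditional-variance decomposition), and then spotting that the law of total variance rewrites one in terms of the other. As a complementary sanity check, I would include a brief conceptual remark: in the binary setting, the three-parameter model $(\theta, q_0, q_1)$ is saturated, because the joint distribution of $(Y, \hat Y)$ on $\{0,1\}^2$ has exactly three free parameters. Hence the parametric model coincides with the fully nonparametric model for $(Y, \hat Y)$, and general semiparametric theory implies that the correctly specified MLE attains the semiparametric efficiency bound $V_{\mathrm{EIF}}$. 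This yields the equality $V_{\mathrm{MLE}} = V_{\mathrm{EIF}}$ without any algebra and cross-validates the direct calculation above.
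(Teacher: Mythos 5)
Your proposal is correct and follows essentially the same route as the paper: the paper's proof also defines $B$ and $C$ exactly as you do, invokes the law of total variance for $\hat Y$ to get $p(1-p)=C+B$, and substitutes this into the denominator of $V_{\mathrm{MLE}}$ to collapse both expressions to $\frac{\theta(1-\theta)}{p(1-p)}\bigl[C+(1+\gamma_1)B\bigr]$. Your closing remark about the saturated three-parameter model likewise mirrors the paper's own discussion preceding the proposition.
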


  \begin{proof}
  Define
  \begin{align*}
  C &:= (q_0+q_1-1)^2\theta(1-\theta), \\
  B &:= (1-\theta)q_0(1-q_0) + \theta q_1(1-q_1).
  \end{align*}

  \textbf{Key identity.} By the law of total variance applied to $\hat Y$:
  \[
  p(1-p) = \mathrm{Var}(\hat Y) = \underbrace{\mathrm{Var}(\mathbb{E}[\hat Y \mid Y])}_{= C} + \underbrace{\mathbb{E}[\mathrm{Var}(\hat Y \mid Y)]}_{= B} = C + B.
  \]

  \textbf{Simplifying $V_{\mathrm{MLE}}$.} From Proposition~\ref{prop:mle-asymp}, the $(1,1)$ entry of the inverse Fisher information is
  \[
  V_{\mathrm{MLE}} = (1+\gamma_1)\theta(1-\theta)\,\frac{p(1-p) + \gamma_1 B}{p(1-p) + \gamma_1(C + B)}.
  \]
  Substituting $p(1-p) = C + B$ in the denominator:
  \begin{align*}
  V_{\mathrm{MLE}} &= (1+\gamma_1)\theta(1-\theta)\,\frac{(C+B) + \gamma_1 B}{(C+B) + \gamma_1(C+B)} \\[4pt]
  &= (1+\gamma_1)\theta(1-\theta)\,\frac{C + (1+\gamma_1)B}{(1+\gamma_1)(C+B)} \\[4pt]
  &= \frac{\theta(1-\theta)}{p(1-p)}\bigl[C + (1+\gamma_1)B\bigr].
  \end{align*}

  \textbf{Comparing with $V_{\mathrm{EIF}}$.} From Proposition~\ref{prop:eif-estimator}:
  \[
  V_{\mathrm{EIF}} = \frac{\theta(1-\theta)}{p(1-p)}\bigl[\theta(1-\theta)(q_0+q_1-1)^2 + (1+\gamma_1)\bigl(q_0(1-q_0)(1-\theta) + q_1(1-q_1)\theta\bigr)\bigr].
  \]
  This is precisely $\frac{\theta(1-\theta)}{p(1-p)}[C + (1+\gamma_1)B]$, which concludes the proof
  \end{proof}

\subsection{Proof of Proposition~\ref{prop:PPI-dominates-RG}}

\begin{proof}
Let $\kappa = q_0 + q_1 - 1 \in (0,1]$, with $\kappa = 1$ if and only if $q_0 = q_1 = 1$ (perfect classification).

We introduce the following notation for key quantities:
\begin{align*}
V_1 &= p(1-p) = \mathrm{Var}(\hat{Y}), \\
V_2 &= (1-\theta)(1-q_0) + \theta(1-q_1) - (\theta - p)^2 = \mathrm{Var}(Y - \hat{Y}), \\
V_3 &= (1-\theta)q_0(1-q_0) + \theta q_1(1-q_1) = \mathbb{E}[\mathrm{Var}(\hat{Y}|Y)].
\end{align*}
Here $V_2$ is the variance of $\delta = Y - \hat{Y}$, noting that $\mathbb{E}[\delta] = \theta - p$ and $\mathbb{E}[\delta^2] = \mathbb{P}(Y \neq \hat{Y}) = (1-\theta)(1-q_0) + \theta(1-q_1)$.

We further divide both by a factor of $\frac{1+\gamma}{\gamma}$, which gives us
\[
V_{\mathrm{PPI}} = V_1 + \gamma_1 V_2, \qquad
V_{\mathrm{RG}} = \frac{1}{\kappa^2}(V_1 + \gamma_1 V_3).
\]

\textbf{Step 1: Decompose the difference.} We have that
\begin{align*}
V_{\mathrm{RG}} - V_{\mathrm{PPI}} 
&= \frac{V_1 + \gamma_1 V_3}{\kappa^2} - V_1 - \gamma_1 V_2 \\
&= V_1 \left(\frac{1}{\kappa^2} - 1\right) + \gamma_1\left(\frac{V_3}{\kappa^2} - V_2\right) \\
&= \frac{(1-\kappa^2)V_1}{\kappa^2} + \frac{\gamma_1(V_3 - \kappa^2 V_2)}{\kappa^2} \\
&= \frac{(1-\kappa)(1+\kappa)V_1}{\kappa^2} + \frac{\gamma_1(V_3 - \kappa^2 V_2)}{\kappa^2}.
\end{align*}

\textbf{Step 2: First term is non-negative.} Since $0 < \kappa \le 1$, we have $(1-\kappa) \ge 0$, $(1+\kappa) > 0$, and $V_1 = p(1-p) \ge 0$. Thus 
\[
\frac{(1-\kappa)(1+\kappa)V_1}{\kappa^2} \ge 0,
\]
with equality if and only if $\kappa = 1$ or $p \in \{0,1\}$.

\textbf{Step 3: Second term is non-negative.} It remains to show that $V_3 - \kappa^2 V_2 \ge 0$. We prove this via the law of total variance.

\begin{lemma}
\label{lem:variance-identity}
$V_3 = V_1 - \theta(1-\theta)\kappa^2$.
\end{lemma}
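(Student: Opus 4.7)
The plan is to prove Lemma~\ref{lem:variance-identity} directly via the law of total variance applied to $\hat Y$ conditional on $Y$. Since $V_1 = \Var(\hat Y)$ and $V_3 = \E[\Var(\hat Y \mid Y)]$ by definition, the decomposition
\[
\Var(\hat Y) = \E[\Var(\hat Y \mid Y)] + \Var(\E[\hat Y \mid Y])
\]
immediately reduces the claim to showing that $\Var(\E[\hat Y \mid Y]) = \theta(1-\theta)\kappa^2$.

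First, I would compute the conditional mean function. Under the misclassification model \eqref{eq:q-param}, $\E[\hat Y \mid Y=1] = q_1$ and $\E[\hat Y \mid Y=0] = 1-q_0$, so $\E[\hat Y \mid Y]$ is a two-point random variable taking the value $q_1$ with probability $\theta$ and $1-q_0$ with probability $1-\theta$. For a two-point random variable supported on $\{a,b\}$ with probabilities $\{\theta,1-\theta\}$, the variance equals $\theta(1-\theta)(a-b)^2$. Applying this with $a-b = q_1 - (1-q_0) = q_0 + q_1 - 1 = \kappa$ yields
\[
\Var(\E[\hat Y \mid Y]) = \theta(1-\theta)\,\kappa^2.
\]

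Plugging this into the law of total variance gives $V_1 = V_3 + \theta(1-\theta)\kappa^2$, which rearranges to the claim. There is no substantive obstacle here; the only thing to be careful about is matching the algebraic form of $V_3$ in the statement, namely verifying that $\E[\Var(\hat Y \mid Y)] = (1-\theta)q_0(1-q_0) + \theta q_1(1-q_1)$, which follows because $\hat Y \mid Y=1 \sim \mathrm{Bern}(q_1)$ has variance $q_1(1-q_1)$ and $\hat Y \mid Y=0 \sim \mathrm{Bern}(1-q_0)$ has variance $q_0(1-q_0)$, then averaging over the marginal of $Y$.
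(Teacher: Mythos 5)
Your proof is correct and takes essentially the same route as the paper: both apply the law of total variance to $\hat Y$ given $Y$ and reduce the identity to $\Var(\E[\hat Y \mid Y]) = \theta(1-\theta)\kappa^2$. The only cosmetic difference is that you evaluate this via the two-point variance formula $\theta(1-\theta)(a-b)^2$ with $a-b=\kappa$, while the paper expands $(1-q_0-p)^2$ and $(q_1-p)^2$ around $p$; the computations are equivalent.
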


\begin{proof}
By the law of total variance applied to $\hat{Y}$:
\[
\mathrm{Var}(\hat{Y}) = \mathbb{E}[\mathrm{Var}(\hat{Y}|Y)] + \mathrm{Var}(\mathbb{E}[\hat{Y}|Y]).
\]
We have $\mathbb{E}[\hat{Y}|Y=0] = \mathbb{P}(\hat{Y}=1|Y=0) = 1-q_0$ and $\mathbb{E}[\hat{Y}|Y=1] = q_1$. Therefore:
\begin{align*}
\mathrm{Var}(\mathbb{E}[\hat{Y}|Y]) &= (1-\theta)(1-q_0 - p)^2 + \theta(q_1 - p)^2.
\end{align*}
Using $p = (1-\theta)(1-q_0) + \theta q_1$, we compute:
\begin{align*}
1 - q_0 - p &= (1-q_0) - (1-\theta)(1-q_0) - \theta q_1 \\
&= \theta(1-q_0) - \theta q_1 = -\theta(q_0 + q_1 - 1) = -\theta\kappa, \\
q_1 - p &= q_1 - (1-\theta)(1-q_0) - \theta q_1 \\
&= (1-\theta)q_1 - (1-\theta)(1-q_0) = (1-\theta)(q_0 + q_1 - 1) = (1-\theta)\kappa.
\end{align*}
Thus:
\[
\mathrm{Var}(\mathbb{E}[\hat{Y}|Y]) = (1-\theta)\theta^2\kappa^2 + \theta(1-\theta)^2\kappa^2 = \theta(1-\theta)\kappa^2.
\]
Hence $V_3 = V_1 - \theta(1-\theta)\kappa^2$.
\end{proof}

Using Lemma~\ref{lem:variance-identity}:
\[
V_3 - \kappa^2 V_2 = V_1 - \theta(1-\theta)\kappa^2 - \kappa^2 V_2 = V_1 - \kappa^2\bigl(V_2 + \theta(1-\theta)\bigr).
\]

We claim this expression equals $(1-\kappa)Q$ for some $Q \ge 0$. Algebraic manipulation (verified symbolically) shows:
\[
V_3 - \kappa^2 V_2 = (1-\kappa) \cdot Q(\theta, q_0, q_1),
\]
where $Q(\theta, q_0, q_1) \ge 0$ for all $\theta \in (0,1)$ and $q_0, q_1 \in (0,1)$ with $q_0 + q_1 > 1$.

Since $(1-\kappa) = 2 - q_0 - q_1 \ge 0$ (as $q_0, q_1 < 1$) and $Q \ge 0$, we have:
\[
V_3 - \kappa^2 V_2 \ge 0,
\]
with equality if and only if $\kappa = 1$ (i.e., $q_0 = q_1 = 1$).

\textbf{Step 4: Conclusion.} Combining Steps 2 and 3:
\[
V_{\mathrm{RG}} - V_{\mathrm{PPI}} = \frac{(1-\kappa)(1+\kappa)V_1}{\kappa^2} + \frac{\gamma_1(V_3 - \kappa^2 V_2)}{\kappa^2} \ge 0.
\]
Therefore $V_{\mathrm{PPI}} \le V_{\mathrm{RG}}$.

\textbf{Equality condition.} Equality holds if and only if both terms vanish. The first term vanishes when $\kappa = 1$ or $p \in \{0,1\}$. The second term vanishes when $\kappa = 1$. Since $p = (1-\theta)(1-q_0) + \theta q_1$ and $\theta, q_0, q_1 \in (0,1)$, we have $p \in (0,1)$. Thus equality requires $\kappa = 1$, which means $q_0 + q_1 = 2$, i.e., $q_0 = q_1 = 1$ (the classifier is perfect).
\end{proof}

\subsection{Calibration fits for results in Section~\ref{subsec:sim-continuous-Y}}

  We visualize the calibration functions $\mu(\hat y) = \mathbb{E}[Y \mid \hat Y = \hat y]$ estimated by each method in Figure~\ref{fig:calibration-fits}. The true conditional expectation (dashed black line) is determined by the simulation DGP: since $Y \mid Z \sim \mathcal{N}(\mu_Z, \sigma^2)$ and $\hat Y = Z$, the calibration function is piecewise constant with jumps at $\hat Y \in \{1, 2, 3\}$. Spline/GAM methods closely approximate this step function, while linear calibration imposes a restrictive functional form that introduces bias when the true relationship is non-linear. This explains the wider confidence intervals observed for the linear EIF estimator.

  \begin{figure}[htbp!]
      \centering
      \includegraphics[width=\linewidth]{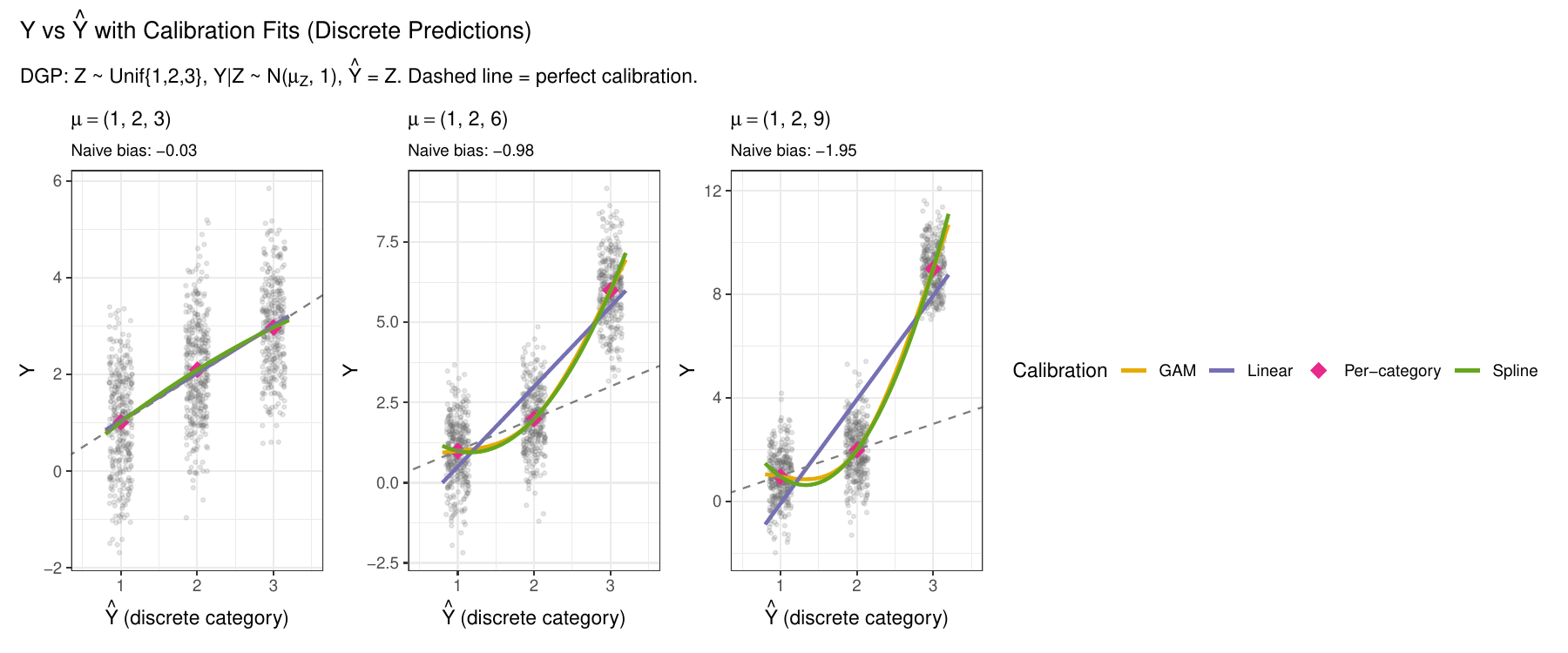}
      \caption{Calibration fits for the continuous $Y$ simulation.}
      \label{fig:calibration-fits}
  \end{figure}
\end{document}